\newcommand{\norm}[1]{\left\lVert#1\right\rVert}
\newcommand{\R}{{\mathbb{R}}}
\newcommand{\B}{{\mathcal B}}
\newcommand{\N}{{\mathbb{N}}}
\newcommand{\e}{\mathsf{e}}
\newcommand{\T}{{\mathbf{T}}}
\newcommand{\So}{{\mathbf{S}}}
\newcommand{\Obs}{{\mathcal{U}}}
\newcommand{\U}{{\mathbf{U}}}
\newcommand{\Prob}{{\mathbb{P}}}
\newcommand{\cen}{{\mathsf{\textbf{c}}}}
\newcommand{\rad}{{\mathsf{\textbf{r}}}}
\newtheorem{theorem}{Theorem}[section]
\newtheorem{assumption}{Assumption}
\newtheorem{proposition}[theorem]{Proposition}
\newtheorem{definition}[theorem]{Definition}
\newtheorem{lemma}[theorem]{Lemma}
\newtheorem{remark}[theorem]{Remark}
\newtheorem{problem}[theorem]{Problem}
\newenvironment{proof}{\par\noindent\textbf{Proof.} }{\hfill$\blacksquare$\par}
\title{Spatiotemporal Tubes for Probabilistic Temporal Reach-Avoid-Stay Task in Uncertain Dynamic Environment
\thanks{ This work was supported in part by the SERB Start-Up Research Grant; in part by the ARTPARK. The work of Ratnangshu Das was supported by the Prime Minister’s Research Fellowship from the Ministry of Education, Government of India.}
}
\author{
 Siddhartha Upadhyay \\
  Robert Bosch Centre for Cyber-Physical Systems\\
  IISc, Bengaluru, India\\
  \texttt{siddharthau@iisc.ac.in} \\
   \And
 Ratnangshu Das \\
  Robert Bosch Centre for Cyber-Physical Systems\\
  IISc, Bengaluru, India\\
  \texttt{ratnangshud@iisc.ac.in} \\
   \And
 Pushpak Jagtap \\
  Robert Bosch Centre for Cyber-Physical Systems\\
  IISc, Bengaluru, India\\
  \texttt{pushpak@iisc.ac.in} \\
}
\begin{document}
\maketitle

\begin{abstract}
In this work, we extend the Spatiotemporal Tube (STT) framework to address Probabilistic Temporal Reach–Avoid–Stay (PrT-RAS) tasks in dynamic environments with uncertain obstacles. We develop a real-time tube synthesis procedure that explicitly accounts for time-varying uncertain obstacles and provides formal probabilistic safety guarantees. The STT is formulated as a time-varying ball in the state space whose center and radius evolve online based on uncertain sensory information. We derive a closed-form, approximation-free control law that confines the system trajectory within the tube, ensuring both probabilistic safety and task satisfaction. Our method offers a formal guarantee for probabilistic avoidance and finite-time task completion. The resulting controller is model-free, approximation-free, and optimization-free, enabling efficient real-time execution while guaranteeing convergence to the target. The effectiveness and scalability of the framework are demonstrated through simulation studies and hardware experiments on mobile robots, a UAV, and a 7-DOF manipulator navigating in cluttered and uncertain environments.
\end{abstract}

\keywords{Spatiotemporal Tube, Real-time Control, Uncertain Environment,  Probabilistic Temporal Reach–Avoid–Stay Tasks}

\section{Introduction}
\label{sec1}
Safety is a key requirement in the development of autonomous systems, especially in safety-critical applications such as autonomous driving, surgical robotics, unmanned aerial vehicles, and industrial automation. These applications demand the systems to offer reliable, precise, and consistent performance in hazardous environments \cite{safety_application}. To achieve these capabilities, autonomous platforms typically rely on sensors such as LiDAR, cameras, GPS, and radar to perceive the surrounding environment \cite{sensor}, allowing reliable obstacle detection, safe decision-making and control \cite{autonomous_1}.

However, autonomous systems still face two major sources of uncertainty: model uncertainty and perception uncertainty. Obtaining an exact model of a robotic platform is often difficult in practice. Unmodeled dynamics, parameter variations, actuator nonlinearities, and external disturbances can all significantly degrade model accuracy \cite{robot_model_1}. {The second major challenge comes from uncertainty in how the system perceives its environment \cite{robotic_perception}. Sensor performance varies with modality and conditions. LiDAR can produce weak or sparse returns in adverse weather or with reflective surfaces \cite{lidar}, cameras are sensitive to lighting and motion \cite{camera}, and RGB-D or infrared sensors often fail in transparent or outdoor scenes \cite{rgbd}. These limitations highlight the need for control frameworks that ensure safety despite uncertainty in both system dynamics and perception. Although many methods address one of these uncertainties, only a few consider both simultaneously, particularly in safety-critical settings.}

To synthesize safe control for safety-critical systems, Control Barrier Functions (CBFs) \cite{cbf} have emerged as one of the most promising frameworks in the literature. However, the standard formulation of CBFs is not inherently robust to model uncertainty or to uncertainty arising from perception errors. Several extensions of CBF have been developed to improve robustness under uncertain dynamics \cite{robust_2, robust_cbf}, but do not handle uncertainty in perception. In \cite{sensor_dist}, a distributionally robust optimization (DRO) CBF-framework is proposed that uses noisy sensor and state-estimation samples to ensure probabilistic safety in dynamic environments, but suffers from sample complexity and does not scale well for higher dimensional systems.

An alternative way to handle environmental uncertainty in motion planning is to impose chance constraints that bound the probability of collision. In \cite{ccp}, the authors derive closed-form collision probabilities that capture uncertainty in both robot and obstacle states, enabling efficient probabilistic collision checking in dynamic environments; related ideas also appear in CC-RRT \cite{ccp_rrt}. Chance constraints have further been incorporated into trajectory optimization frameworks, such as \cite{chance_mpc}, which extends the Unscented Model Predictive Path Integral method \cite{umppi} to $C^2$U-MPPI by embedding probabilistic collision checks. Similar formulations have been used in human–robot collaboration, where collision probability is encoded as an uncertain CBF constraint \cite{ucbf}. While these methods explicitly integrate uncertainty into the optimization, they often incur high computational costs due to the sample-based evaluation of constraints and typically require accurate system dynamics.

The Spatiotemporal Tube (STT) framework is one of the most effective approaches for handling system uncertainty without requiring explicit dynamics. STT \cite{das2024spatiotemporal} offers a robust, model-free, approximation-free formulation for fully actuated systems, providing closed-form controllers that address parametric and disturbance uncertainties. Recent work in \cite{STT_real_time} extends STT to real-time Temporal Reach-Avoid Stay (T-RAS) tasks \cite{jagtap2020formal} via online tube synthesis. Unlike A* or RRT planners that need separate controllers, potential fields without guarantees, or MPC or CBF methods that require accurate models, STT stays robust without relying on a system model. However, the real time STT framework in \cite{STT_real_time} assumes perfect perception and does not incorporate uncertainty in obstacle locations, limiting its applicability when obstacle positions are imprecisely known. Furthermore, its prescribed-time tube synthesis may yield non-smooth or infeasible trajectories in real-time settings where obstacle information evolves online.


In this work, we aim to bridge the gap in jointly managing environmental uncertainty and unknown system dynamics within a single unified framework. We assume that the robot’s state can be measured accurately using high-precision localization sensors such as GPS. In contrast, the robot doesn't have access to the true position of the obstacle in the environment, but instead has access to a probability distribution, such as a distribution that can be produced by SLAM algorithms \cite{aulinas2008slam} and typically consists of a Gaussian distribution over the obstacles in the environment. 
 We extend the Spatiotemporal Tube (STT) methodology to solve temporal reach–avoid–stay (T-RAS) tasks under \textit{uncertain environments}, while providing \textit{probabilistic guarantees} for avoiding uncertain unsafe sets.
Our main contributions are summarized as follows:
\begin{enumerate}
  \item We extend the real-time STT based framework to solve the \textit{Temporal Reach–Avoid–Stay (T-RAS)} in the presence of a dynamically uncertain unsafe set and reformulate the problem as a \textit{Probabilistic Temporal Reach–Avoid–Stay (PrT-RAS)} task, where each dynamic unsafe set is associated with a prior known time varying uncertainty level and a user-defined minimum probability of avoidance.
 \item {We propose a framework that incorporates dynamically uncertain unsafe sets for real-time tube synthesis, ensuring PrT-RAS satisfaction with probabilistic safety guarantees, and we relax the prescribed-time requirement of \cite{STT_real_time} by generating tubes that ensure finite-time convergence, making the approach more suitable for dynamic environments.}
  \item We derive an \textit{approximation-free}, \textit{model-free}, and \textit{optimization-free} \textit{closed-form controller} that constrains the system trajectory within a real-time evolving tube, ensuring that the agent avoids dynamic, uncertain unsafe sets with a user-specified minimum probability.
  \item We validate our framework through extensive simulations and hardware experiments, including a 2D omnidirectional robot navigating dynamic uncertain obstacles, a 3D UAV case study, and hardware tests on a 2D robot and a 7-DOF manipulator, also demonstrating robustness to external disturbances.
\end{enumerate}

\section{Preliminaries and Problem Formulation}
\label{sec:prelim}
\subsection{Notation}
For $a,b\in\N$ and $a\leq b$, we use $[a;b]$ to denote close interval in $\N$.
A ball centered at $\cen \in \mathbb{R}^n$ with radius $\rad \in \mathbb{R}^+$ is defined as $\mathcal{B}(\cen, \rad) := \{ x \in \mathbb{R}^n \mid \|x - \cen\| \leq \rad \}$. We define $\Prob$ as a probability measure induced by a continuous random variable $Z$ for a measurable set $A\subset\R^n$ and is represented as $\Prob(Z \in A)$. 
We use $<a,b>$ to denote the inner product. We use $x\circ y$ to represent the element-wise multiplication where $x,y\in \R^n$. $I_n$ is identity matrix of order $n\in \N$. We use $\mathcal{N}(x,\Sigma)$ to represented a normal distribution of $n\in \N$ dimensional gaussian vector with mean $\mu\in \R^n$ and a symmetric positive definite covariance matrix $\Sigma \in R^{n\times n}$.
All other notations in this paper follow standard mathematical conventions.

\subsection{System Definition}

We consider a class of uncertain control-affine, multi-input multi-output (MIMO), nonlinear pure-feedback systems:
\begin{align} \label{eqn:sysdyn}
  &\dot{x}_i(t) = f_i(\overline{x}_i(t)) + g_i(\overline{x}_i(t))x_{i+1}(t) + w_i(t), i\in [1;N-1], \notag\\
  &\dot{x}_{N}(t) = f_N(\overline{x}_N(t)) + g_N(\overline{x}_N(t))u(t) + w_N(t), \\
  &y(t) = x_1(t), \nonumber
\end{align}
where for each $t\in\R^+_0$ and $i\in[1;N]$,
\begin{itemize}
  \item $x_i(t) = [x_{i,1}(t), \ldots, x_{i,n}(t)]^\top \in\mathbb{R}^{n}$ is the state,
  \item $\overline{x}_i(t) := [x_1^\top(t),...,x_i^\top(t)]^\top \subset \mathbb{R}^{ni} $,
  \item $u(t) \in \mathbb{R}^n$ is the control input vector,
  \item $w_i(t) \in \mathbf{W} \subset \R^n$ is the unknown bounded disturbance, and
  \item $y(t) = [x_{1,1}(t), \ldots, x_{1,n}(t)]\in \R^n$ is the output.
\end{itemize}

The functions $f_i: \R^{ni} \rightarrow \mathbb{R}^n$ and $g_i: \R^{ni} \rightarrow \mathbb{R}^{n \times n}$ satisfy the following assumptions:
\begin{assumption}\label{assum:lip}
  For all $i \in [1;N]$, the functions $f_i$ and $g_i$ are unknown but locally Lipschitz continuous.
\end{assumption}
\begin{assumption}[\cite{PPC1,PPC0}] \label{assum:pd}
  For all $\overline{x}_i \in \R^{ni} $, the symmetric part of $g_i$, defined as $g_{i,s}(\overline{x}_i) := \frac{g_i(\overline{x}_i)+g_i(\overline{x}_i)^\top}{2}$ is uniformly sign definite with known sign. Without loss of generality, we assume $g_{i,s}(\overline{x}_i)$ is positive definite, that is, there exists a constant $\underline{g_i}\in\mathbb R^+, \forall i \in [1;N]$ such that
  $0 < \underline{g_i} \leq \lambda_{\min} (g_{i,s}(\overline{x}_i)), \forall \ \overline{x}_i \in \R^{ni},$
  where $\lambda_{\min}(\cdot)$ denotes the smallest eigenvalue of a matrix.
\end{assumption}
This assumption ensures that in \eqref{eqn:sysdyn} global controllability is guaranteed, i.e., $g_{i,s}(\overline{x}_i) \neq 0,$ for all $\overline{x}_i \in \R^{ni} $.

\subsection{Problem Formulation}
In this work, we consider that the output of the system \eqref{eqn:sysdyn}, $y(t)$, is subject to a \textit{temporal reach-avoid-stay (T-RAS) specification} under an uncertain environment. To formally define a probabilistic task, we first define the time-varying uncertain unsafe set below.
\begin{definition}[Time-varying Uncertain Unsafe Set]\label{defn:obs}
We define the

\hspace{-0.8cm} uncertain unsafe set $\U(t)$ as the union of $n_o$ obstacles, each represented as a time-varying closed ball: 
  \begin{align}
      \U(t) = \bigcup_{j=1}^{n_o} \Obs^{(j)}(t), \quad \text{where} \quad \Obs^{(j)}(t):= \mathcal{B}(O_p^{(j)}(t), \rad_o^{(j)}(t)).\notag
  \end{align}
  The center of the $j$-th obstacle $\Obs^{(j)}(t)$ is a random vector 
  $$O_p^{(j)}(t) \sim \mathcal{N}\Big(\mu^{(j)}(t), \Sigma^{(j)}(t)\Big),$$ 
  where $\mu^{(j)}(t)\in \mathbb{R}^n$ is the mean and $\Sigma^{(j)}(t):=(\sigma^{(j)}(t))^2\mathbb{I}_{n\times n}$ with $(\sigma^{(j)}(t))^2 \in \R^+$ is an isotropic covariance matrix, chosen to model uncertainty that is direction-independent around each obstacle center and $\rad_o^{(j)}(t)$ is the radius of the obstacle we assume that there is no uncertainty in the radius of the obstacle. Since these regions are defined independently, they allow modeling multiple disconnected and dynamically evolving uncertain unsafe regions.
\end{definition}
\begin{remark}
 { The obstacle positions are treated as uncertain because real-world perception systems provide only approximate estimates of the environment. Noise in sensors such as LiDAR and RGB-D cameras, limited resolution, and measurement bias, all contribute to uncertainty in the perceived obstacle locations, making a probabilistic representation necessary \cite{uncertainity}.}
\end{remark}
\begin{definition}[Probabilistic Temporal Reach-Avoid-Stay]\label{def:prtras}
Given the output-space $\R^n$, a time-varying uncertain unsafe set $\U: \R_0^+ \rightarrow \R^n$ as
defined in Definition \eqref{defn:obs}, an initial set $\So \subset \R^n$ with $\Prob\Big(\norm {s-O_p^{(j)}(t)}\geq\rad^{(j)}_o(t)\Big)\geq\varepsilon^{(j)} ,\forall s\in \mathbf{S}, \varepsilon^{(j)}\in(0,1],j \in [1;n_o]$ and a target set $\T \subset \R^n $, we say the output $y(t)$ satisfies the \textit{PrT-RAS} task if: 
\begin{align}
  & y(0) \in \So,\notag\\
  & \exists t_c\in \R_0^+ \text{ s.t. } y(t) \in \T,\forall t \in [t_c,\infty),\notag\\
  &\Prob\Big(\norm{y(t)-O_p^{(j)}(t)}\geq\rad^{(j)}_o(t)\Big)\geq\varepsilon^{(j)},\forall j \in [1;n_o], \forall t \in \R_0^+.
\end{align}
where $\varepsilon^{(j)}$ is the user defined value that specifies the minimum probability with which $j^{th}$ obstacle should be avoided. 
\end{definition}

We will now state the main control problem addressed in this work.
\begin{problem}[Real-time Control under Uncertainty]\label{prob1}
Given the uncertain system dynamics in \eqref{eqn:sysdyn} under Assumptions \ref{assum:lip} and \ref{assum:pd}, and a PrT-RAS task as defined in Definition \ref{def:prtras}, synthesize a \textit{real-time}, \textit{approximation-free}, and \textit{closed-form} control law $u(t)$ such that the resulting output trajectory $y(t)$ satisfies the PrT-RAS specification.
\end{problem}

To solve the above problem, we utilize the STT-based framework, which prescribes a time-varying region in the output space that remains probabilistically safe and goal-directed throughout the horizon.
\begin{definition}\label{STT-defination}
  Given a \textbf{PrT-RAS} in Definition \ref{def:prtras}, an STT is defined as a time-varying ball
  $\Gamma(t)=\B(\cen(t),\rad(t)),$ 
  with center $\cen:\mathbb{R}_0^+ \rightarrow\mathbb{R}^n$ and radius $\rad:\mathbb{R}_0^+ \rightarrow\mathbb{R}^+$, such that:
  \begin{subequations}
    \begin{align}
      &\rad(t)
\in\mathbb{R}^+,\quad \forall t \in\R_0^+,\\
 &\Gamma(0) \subset \mathbf{S},\\
 &\exists t_c\in \R_0^+ \text{ s.t. }\Gamma(t_c) \subset \mathbf{T},\forall t\in [t_c,\infty),\\
  &\Prob\Big(\norm{\cen(t)-O_p^{(j)}(t)}\geq\rad^{(j)}_o(t)+\rad(t)\Big)\geq\varepsilon^{(j)},\forall j \in [1;n_o], \forall t \in \R_0^+.
 \end{align}
  \end{subequations}
\end{definition}
\begin{remark}
{Unlike prescribed-time convergence used in \cite{STT_real_time}, Definition~\ref{def:prtras} only requires the system to reach the target set in finite time. This is more suitable for dynamic environments where obstacle information changes over time; enforcing a fixed convergence time can force the tube center $\cen(t)$ to move unrealistically fast, leading to non-smooth or infeasible trajectories.}
\end{remark}

\section{Designing Spatiotemporal Tubes} \label{sec:tube}
In this section, the main goal is to construct the STT that starts from the initial set and reaches the target set, while maintaining a minimum positive probability of avoiding the uncertain unsafe set.

We begin by selecting the points $s=[s_1,...,s_n]^\top\in \text{int}(\So)$ and $\eta=[\eta_1,...,\eta_n]^\top\in \text{int}(\T)$, located in the interior of the initial set $\So$ and the target set $\T$, respectively. Around the points, we define the balls $\hat\So :=\B(s,d_S)$ and $\hat\T :=\B(\eta,d_T)$ of radii $d_S,d_T\in \mathbb{R}^+$,
chosen such that $\hat \So\subset\So$ and $\hat\T\subset \T$. 
Additionally, to ensure a safe approach to the target, we introduce the following separation assumption on the probabilistic obstacle locations.
\begin{assumption}\label{ass_pmin}
  After some time $t_1\in \R_0^+$, the unsafe set remain separated from the tube center $\cen(t)$ with at least a known probability $p_{d}^{(j)}\in (\varepsilon^{(j)},1]$:
  $$\Prob\Big(\norm{\cen(t)-O_p^{(j)}(t)}\geq\rad_s^{(j)}(t)\Big)\geq p_d^{(j)},\forall j \in [1;n_o], \forall t \in [t_1 \infty), $$ 
  where $\rad_s^{(j)}(t)=\rad^{(j)}_o(t)+\rad_{min}$ is the safety radius with the buffer margin $\rad_{min}$.
\end{assumption} 

Now, to construct such an STT, we define the evolution laws for its center and radius, designed to ensure that the resulting tube satisfies the PrT-RAS specification.

The center $\cen(t)$ evolves according to the following dynamics:
 \begin{equation}\label{cen_dynamic}
    \dot{\cen}=k_1(\eta-\cen(t))^{\frac{1}{3}}+\sum_{j}^{n_o}\Big (k_{2,j}m^{(j)}(t) + k_{3,j}v^{(j)}(t)\Big ) \theta^{(j)}(t) ,\quad \cen(0)=s \in \text{int}(\So),
  \end{equation}
  where $k_{1}\in \R^+$ and $k_{2,j},k_{3,j}\in \R^+$ are arbitrary positive constants.
  
  The first term in equation \eqref{cen_dynamic} is responsible for pulling the center of the tube towards the target in finite time, and the rate of convergence to the target set depends on $k_1\in \R^+$.
  
  The switching function $\theta^{(j)}$ in \eqref{cen_dynamic} activates the probabilistic obstacle avoidance when the following condition is met:
  $$
\theta^{(j)}(t) = 
\begin{cases}
\frac{1}{q^{(j)}(\cen(t),t)}-\frac{1}{p_d^{(j)}}, & \text{if } q^{(j)}(\cen(t),t) \leq p_d^{(j)} \\
0, & \text{otherwise},
\end{cases}
$$
where $p_d^{(j)}\in(\varepsilon^{(j)},1]$ is a user-defined threshold specifying the minimum acceptable probability of avoiding the $j^{th}$ unsafe set. Thus, the avoidance part is activated only when the probability of collision exceeds $1-p_d^{(j)}$. 
This avoidance term in \eqref{cen_dynamic} is mainly defined by two vectors $m^{(j)}(t)$ and $v^{(j)}(t)\in \R^n$
\begin{align}
  m^{(j)}(t)=\frac{\cen(t)-\mu^{(j)}(t)}{q^{(j)}(\cen(t),t)-\varepsilon^{(j)}},
  \quad \forall j \in \{1, \ldots, n_o\},\nonumber
\end{align}
and $v^{(j)}$ lies in the null space of $m^{(j)}$ i.e., ${m^{(j)}}^\top(t) v^{(j)}(t)=0$. The term $q^{(j)}(x,t):\R^n\times\mathbb{R}^+_0 \rightarrow [0,1],\forall j \in [1,n_o]$ is the probability of avoiding the collision and is given by:
 \begin{align}\label{eqn:prob_q}
  &q^{(j)}(x,t)=1-\mathbb{P}\Big(\norm{x-O_p^{(j)}(t)}<\rad_s^{(j)}(t)\Big).
  \end{align}
\begin{proposition}\label{prop:1}
The probability that the center $\cen(t)$ avoids the unsafe set in \eqref{eqn:prob_q} can be rewritten as follows:
\begin{align}\label{eqn:q_F}
  q^{(j)}(\cen(t),t)=1-F_{\hat Z^{(j)}(t)}\Bigg(\Bigg(\frac{\rad_s^{(j)}(t)}{\sigma^{(j)}(t)}\Bigg)^2;n,\lambda^{(j)}(\cen(t),t)\Bigg),
\end{align}
where $\hat{Z}^{(j)}(t)$ follows a non-central chi-square distribution with degrees of freedom equal to the dimension of the state space and non-centrality parameter $\lambda^{(j)}(\cen(t),t)=\frac{\lVert \cen(t)-\mu^{(j)}(t)\rVert^2}{(\sigma^{(j)}(t))^2}$. The function $F_{\hat{Z}^{(j)}(t)}$ is the cumulative distribution function (CDF) with distribution parameter $n\text{ and }\lambda^{(j)}(\cen
(t),t)$, calculated using the method described in \cite{chi_1}.
\end{proposition}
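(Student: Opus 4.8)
The plan is to express the collision probability $\Prob\big(\norm{\cen(t)-O_p^{(j)}(t)}<\rad_s^{(j)}(t)\big)$ as the CDF of a suitably normalized squared distance, and then recognize that normalized quantity as a non-central chi-square random variable. First I would fix a time $t$ and suppress the index $j$ for clarity, writing $O\sim\mathcal{N}(\mu,\sigma^2 I_n)$ for the obstacle center, $x=\cen(t)$, and $r=\rad_s(t)$. Since the event $\{\norm{x-O}<r\}$ coincides with $\{\norm{x-O}^2<r^2\}$, I would work with the squared Euclidean distance throughout.

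The key step is the normalization, which is exactly where the isotropy of the covariance is used. Because $\Sigma=\sigma^2 I_n$, each coordinate $O_i\sim\mathcal{N}(\mu_i,\sigma^2)$ is independent, so defining $W_i:=(x_i-O_i)/\sigma$ gives independent unit-variance Gaussians $W_i\sim\mathcal{N}((x_i-\mu_i)/\sigma,1)$. By the definition of the non-central chi-square distribution as the law of a sum of squares of independent unit-variance normals, the quantity
\[
\hat{Z} := \sum_{i=1}^n W_i^2 = \frac{\norm{x-O}^2}{\sigma^2}
\]
follows a non-central chi-square distribution with $n$ degrees of freedom and non-centrality parameter $\lambda=\sum_{i=1}^n\big((x_i-\mu_i)/\sigma\big)^2=\norm{x-\mu}^2/\sigma^2$, which is precisely the claimed $\lambda^{(j)}(\cen(t),t)$.

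With this identification in hand, I would rewrite the collision probability as $\Prob\big(\norm{x-O}^2/\sigma^2 < r^2/\sigma^2\big)=F_{\hat{Z}}\big((r/\sigma)^2;n,\lambda\big)$ straight from the CDF definition, and substitute into $q^{(j)}(x,t)=1-\Prob(\cdot)$ to recover the stated expression; the actual numerical evaluation of $F_{\hat{Z}}$ is deferred to the method of \cite{chi_1} and needs no closed form here. I do not expect a genuine obstacle in this argument: once isotropy decouples the coordinates, the result is essentially a direct invocation of the definition of the non-central chi-square law. The only points requiring minor care are checking that the non-centrality parameter is taken relative to the mean $\mu^{(j)}(t)$ (not $\cen(t)$), and that the threshold $(r/\sigma)^2$ and the non-centrality parameter are both scaled by the same $\sigma^{(j)}(t)$, so that the two squared ratios are dimensionally consistent.
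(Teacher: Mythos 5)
Your proposal is correct and follows essentially the same route as the paper's proof: normalize the difference $\cen(t)-O_p^{(j)}(t)$ by $\sigma^{(j)}(t)$, identify the squared norm of the resulting unit-variance Gaussian vector as a non-central chi-square variable with $n$ degrees of freedom and non-centrality $\lVert \cen(t)-\mu^{(j)}(t)\rVert^2/(\sigma^{(j)}(t))^2$, and rewrite the collision probability via its CDF. Your explicit coordinate-wise decoupling via isotropy is just a slightly more detailed phrasing of the paper's vector-level normalization step.
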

\begin{proof}
  See Appendix \eqref{appendix_1}
\end{proof}

Next, we describe how the radius of the tube changes over time. The radius $\rad(t)$ is dynamically adjusted according to the proximity of the tube to the region where the probability of avoiding the unsafe set is less than or equal to $\varepsilon^{(j)}$. Its evolution is given by:
\begin{align}\label{eq:rad}
\dot \rad(t)&=\frac{e^{-\nu d(t)}\dot{d}(t)}{(e^{-\nu\rad_{max}+e^{-\nu d(t)}})},
\end{align}
where $\nu \in \R^+$ is a tuning constant that controls how smoothly the radius shrinks and grows. The function $d(t)$ is defined as a smooth approximation of the minimum over the $\hat{d}^{(j)}(t),\forall j \in[1,n_o]$.
\begin{align}
  d(t)&=-\frac{1}{\nu}\Big( \sum_{j=1}^{n_o}e^{-v\hat d^{(j)} (t)} {\hat{d}^{(j)}}(t)\Big),\nonumber\\ 
  \hat{d}^{(j)}(t)&= \sigma^{(j)}(t)\sqrt{F_{\hat Z^{(j)}}^{-1}(1-\varepsilon^{(j)})}-\rad^{(j)}_o(t),\quad \label{eqn:m_crv}
\end{align}
where $\rad_{max} \leq \min(d_S, d_T)$ and $\rad_{min} < \rad_{max}$ are the maximum and minimum allowable tube radius, respectively. The term $F^{-1}_{\hat Z^{(j)}}$ represents the inverse CDF of $\hat Z^{(j)}$, computed following \cite{chi_compute}. Intuitively, $\hat{d}^{(j)}(t)$ quantifies how far the tube center $\cen(t)$ is from the boundary at which the probability of colliding with the $j$-th obstacle reaches $1-\varepsilon^{(j)}$. We then compute a smooth minimum over all such distances $\hat{d}^{(j)}(t)$ for $j \in [1,n_o]$. Finally, to get $\rad(t)$, this aggregated value is smoothly minimized with the user-defined default radius $\rad_{max}$ to limit the radius of the tube to $\rad_{max}$.

In order to understand the intuition behind \eqref{cen_dynamic} and \eqref{eq:rad}. Let $\hat q^{(j)}(x)$ be the probability that a point $x \in \mathbb{R}^n$ avoids a collision with the $j$-th obstacle:
\begin{align}
  \hat q^{(j)}(x,t)=1-\mathbb{P}\Big(\norm{x-O_p^{(j)}(t)}<\rad_o^{(j)}(t)\Big), \forall x\in \R^n.\label{eqn:q_hat}
\end{align}
To satisfy the avoidance specification, the computed tube $\Gamma(t)$ must lie
outside the $\varepsilon^{(j)}$ sub-level set of $\hat q(x,t) $ region shown in Figure~\ref{fig:intution} at a particular time $t\in \R^+_0$. Likewise, the tube center $\cen(t)$ must remain outside the light red region, corresponding to the $\varepsilon^{(j)}$ sub-level set of $q^{(j)}(x)$ at a particular time $t\in \R^+_0$.

The STT center \eqref{cen_dynamic} and radius \eqref{eq:rad} are designed to satisfy these requirements. Specifically, the probabilistic avoidance term in the center dynamics becomes active whenever $\cen(t)$ enters the outermost region shown in Figure~\ref{fig:intution}, i.e., when $\theta^{(j)}(t)\neq 0$ and prevents the center from entering the middle region. Meanwhile, the radius dynamics ensure that the tube itself remains outside the innermost region, as depicted in Figure~\ref{fig:intution}.
\begin{figure*}
  \centering
  \includegraphics[width=0.7\linewidth]{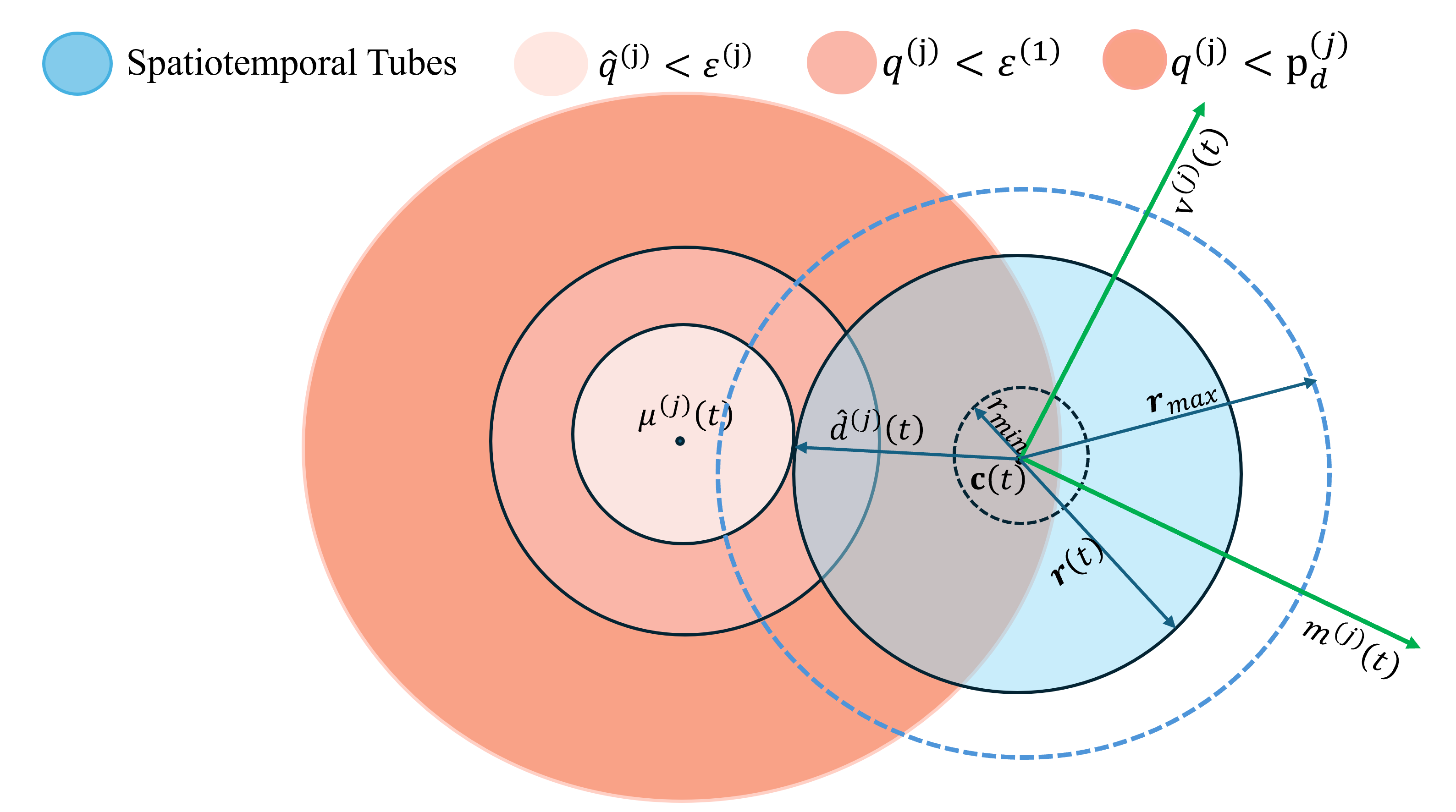}
  \caption{A schematic representation of the probabilistic avoidance mechanism corresponding to an uncertain obstacle $O_p^{(j)}(t)\sim\mathcal{N}(\mu^{(j)}(t),\Sigma^{(j)}(t))$. 
  }
  \label{fig:intution}
\end{figure*}


The next theorem establishes the main result of the paper, showing that the designed STT satisfies the PrT-RAS specification.


\begin{theorem}
  Let the time-varying center $\cen: \R_0^+ \rightarrow \R^n$ and $\rad: \R_0^+ \rightarrow \R^n$ evolve according to the dynamics in Equations \eqref{cen_dynamic} and \eqref{eq:rad}, respectively. Define the STT
  \begin{equation}\label{eqn:stt_ball}
    \Gamma(t) = \mathcal{B} (\cen(t), \rad(t)) := \{ x\in \R^n \mid \|x-\cen(t)\| \leq \rad(t)\}, \quad \forall t \in \R_0^+ .
  \end{equation}
  Then, the STT $\Gamma(t)$ satisfies all the conditions required to ensure the PrT-RAS specification:
  \begin{enumerate}
    \item [(i) ]The tube starts inside the initial set and reaches the target set in finite time $t_c\in \R_0^+$ and stays inside the target thereafter: 
    \begin{align}
    \Gamma(0)\subset\So,\Gamma(t)\subset \T,\forall t \in[t_c,\infty).
    \end{align}
    
    \item [(ii)] For each obstacle and at all times, the probability that the tube avoids the obstacle is lower bounded by the user-specified threshold $\varepsilon^{(j)}$:
    \begin{align}
      \Prob\Big(\norm{\cen(t)-O_p^{(j)}(t)}\geq\rad_o^{(j)}(t)+\rad(t)\Big)>\varepsilon^{(j)},\forall j\in[1;n_o], t \in \R_0^+.
    \end{align}
    \item [(iii)] The tube radius always remains strictly positive: $\rad(t)\in \R ^+,\forall t\in\R_0^+$.
  \end{enumerate}
\end{theorem}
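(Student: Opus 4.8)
The plan is to verify the three claims separately, exploiting the two complementary mechanisms built into the dynamics: the repulsive (barrier) term in the center law \eqref{cen_dynamic}, which keeps $\cen(t)$ inside the region where the avoidance probability strictly exceeds $\varepsilon^{(j)}$, and the damped radius law \eqref{eq:rad}, which keeps the inflated tube from crossing the $\varepsilon^{(j)}$-boundary while holding $\rad(t)$ positive. I would first record the closed-form consequence of \eqref{eq:rad}: since the multiplier $e^{-\nu d(t)}/(e^{-\nu\rad_{max}}+e^{-\nu d(t)})$ lies in $(0,1)$, integrating gives $\rad(t)=-\tfrac1\nu\ln\big(e^{-\nu\rad_{max}}+e^{-\nu d(t)}\big)+C$, a damped soft-minimum of $\rad_{max}$ and $d(t)$ that tracks $d$ without overshoot. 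This single identity feeds both (ii) and (iii).

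For (i), the inclusion $\Gamma(0)\subset\So$ is immediate: $\cen(0)=s\in\mathrm{int}(\So)$ and $\rad(0)\le\rad_{max}\le d_S$, so $\Gamma(0)=\B(s,\rad(0))\subset\hat\So\subset\So$. For reach-and-stay I would invoke Assumption~\ref{ass_pmin}: for $t\ge t_1$ the separation $q^{(j)}(\cen(t),t)\ge p_d^{(j)}$ holds, hence every switching function $\theta^{(j)}(t)=0$ and the center law collapses to the pure finite-time flow $\dot\cen=k_1(\eta-\cen)^{1/3}$. Taking $V=\tfrac12\norm{\cen-\eta}^2$ yields $\dot V=-k_1\sum_i|\cen_i-\eta_i|^{4/3}\le -k_1(2V)^{2/3}$, so the standard homogeneity/finite-time Lyapunov argument gives $\cen(t)=\eta$ after a finite time; combined with $\rad(t)\le\rad_{max}\le d_T$ this gives $\Gamma(t_c)\subset\B(\eta,d_T)=\hat\T\subset\T$, and since $\eta$ is an equilibrium of the deactivated flow the inclusion persists for all $t\ge t_c$.

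The crux is (ii). Generalising Proposition~\ref{prop:1} from $\rad_s^{(j)}$ to the inflated radius $\rad_o^{(j)}(t)+\rad(t)$, the target probability equals $1-F_{\hat Z^{(j)}(t)}\big(((\rad_o^{(j)}+\rad)/\sigma^{(j)})^2;n,\lambda^{(j)}\big)$, so it suffices to show $(\rad_o^{(j)}+\rad(t))^2/(\sigma^{(j)})^2< F^{-1}_{\hat Z^{(j)}}(1-\varepsilon^{(j)})$. The radius law supplies exactly this: the soft-minimum keeps $\rad(t)\le d(t)\le\hat d^{(j)}(t)$, and by the definition of $\hat d^{(j)}$ in \eqref{eqn:m_crv} this rearranges to $\rad_o^{(j)}+\rad(t)\le\sigma^{(j)}\sqrt{F^{-1}_{\hat Z^{(j)}}(1-\varepsilon^{(j)})}$; monotonicity of the (noncentral chi-square) CDF then converts the radius bound into the probability bound, with strictness inherited from the strict soft-minimum inequality. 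The main obstacle is making this chain airtight: $\hat d^{(j)}$ must stay nonnegative for the bound to be meaningful, which is precisely where the center's barrier term re-enters, since the factor $1/(q^{(j)}-\varepsilon^{(j)})$ in $m^{(j)}$ blows up as $q^{(j)}\downarrow\varepsilon^{(j)}$. I would therefore prove, by a forward-invariance argument on $q^{(j)}(\cen(t),t)$ along \eqref{cen_dynamic}, that $q^{(j)}(\cen(t),t)>\varepsilon^{(j)}$ for all $t$, equivalently $\hat d^{(j)}(t)>\rad_{min}>0$, keeping the noncentrality $\lambda^{(j)}$ admissible throughout. I expect the delicate points to be justifying this invariance while the convergence term is simultaneously active, and controlling the smooth minimum's deviation from the true minimum uniformly over the $n_o$ obstacles.

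Finally, (iii) follows from the same closed form: since $\rad(t)$ damped-tracks $d(t)$ and, by the barrier argument above, $d(t)$ is a smooth minimum of the $\hat d^{(j)}(t)>\rad_{min}$, choosing $\nu$ large enough that the soft-minimum gap $\tfrac{\ln n_o}{\nu}$ stays below $\rad_{min}$ gives $\rad(t)>0$ for all $t\in\R_0^+$, while the soft-minimum with $\rad_{max}$ keeps $\rad(t)\le\rad_{max}$ so that the tube remains admissible inside $\hat\So$ and $\hat\T$.
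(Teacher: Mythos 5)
Your proposal is correct and follows essentially the same route as the paper: the same three-part decomposition, with Assumption~\ref{ass_pmin} deactivating $\theta^{(j)}$ for the finite-time reach in (i), a forward-invariance/barrier argument giving $q^{(j)}(\cen(t),t)>\varepsilon^{(j)}$ (equivalently $\hat d^{(j)}(t)>\rad_{min}$) via the blow-up of the $1/(q^{(j)}-\varepsilon^{(j)})$ term, the soft-minimum bound $\rad(t)\leq \hat d^{(j)}(t)$ converted to the probability bound by CDF monotonicity for (ii), and the closed-form radius solution for (iii). The only minor deviations are that you establish finite-time convergence by a Lyapunov comparison ($\dot V\leq -k_1(2V)^{2/3}$) where the paper integrates $\dot\cen=k_1(\eta-\cen)^{1/3}$ explicitly to get $t_c$, and your explicit $\nu$-largeness condition in (iii) is in fact slightly more careful than the paper's unqualified positivity claim.
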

\begin{proof}
  We will prove each of the three claims in the theorem independently.\\
  (i) At $t=0$, the tube center starts from the point $s$, $\cen(0)=s$. 
  
  From Assumption \ref{ass_pmin}, for all $t\geq t_1$, the tube remains sufficiently far from all the unsafe regions, implying that $\theta^{(j)}=0,$ for all $j\in[1,n_o]$. Thus, for all $t \geq t_1$, Equation \eqref{cen_dynamic} simplifies to $\dot{\cen}=k_1(\eta-\cen(t))^{\frac{1}{3}}$. Solving this, we obtain
  \begin{align}
    \cen(t)=\eta-\big((\eta-\cen(t_1))^{2/3}-\frac{2}{3}k_1(t-t_1)\big)^{3/2},\label{eqn:cen_simplified}
  \end{align}
  where $c(t_1)$ is the location of the tube center at time $t=t_1$. 

Equating \eqref{eqn:cen_simplified} to the target point $\eta$ gives the convergence time $t_c$ as follows:
$t_c=t_1+\frac{3(\eta-c(t_1))^{\frac{2}{3}}}{2k_1} \in \R^+, \ \text{ at which } \ \cen(t_c) \rightarrow \eta.$
Thus, the tube center reaches the target set in finite time, establishing the finite-time convergence of $\cen(t)$. 

Also, we can write the solution for the dynamics of radius $\rad(t)$ as:
\begin{align}
  \rad(t)=-\frac{1}{\nu}ln(\e^{-\nu\rad_{max}}+e^{-\nu d(t)}), \label{eqn:rad_sol}
\end{align}
which represents a smooth approximation of the $\min$ operator. Accordingly, the tube radius satisfies the inequality:
\begin{align}
  \rad(t)\leq\min(\rad_{max},\min_{j=1,..,n_o}\hat d^{(j)}(t)). \label{eqn:rad:inequality}
\end{align}
From the above inequality, we can infer that $\rad(t)\leq\rad_{\max}\leq \min(d_S,d_T),\forall t \in \R_0^+$. Consequently, the STT satisfies $\Gamma(0)\subset \So$ and $\Gamma(t)\subset \T, \forall t \in [t_c,\infty)$, which ensures that the tube remains entirely within the initial and target sets during the respective phases of the task..

(ii) We now prove the second claim of the theorem in two parts. First, we show that $q^{(j)}(\,t)>\varepsilon^{(j)}$, for all $t\in\R_0^+$, meaning that the tube center avoids each unsafe set with a probability strictly greater than the required threshold $\varepsilon^{(j)}$ for all $j \in [1;n_o]$. In the second part, we show that $\rad(t)\leq \min_{j=[1; n_o]}\hat d ^{(j)}$, for all $t \in \R_0^+$, implying that $\rad(t)$ remains small enough such that the tube avoids the obstacle with a minimum probability of $\varepsilon^{(j)}$ for all $j \in [1;n_o]$.

\textbf{Part 1:} For all $j\in [1,n_o]$, define a time varying function $J^{(j)}(t)=q^{(j)}(\cen(t),t)-\varepsilon^{(j)}$ with time derivative $\dot J^{(j)}(t)=\dot q^{(j)}(\cen(t),t).$
Substituting the expression for $q^{(j)}(\cen(t),t)$ from \eqref{eqn:q_F} and the corresponding derivative $\dot q^{(j)}(\cen(t),t)$, into $\dot J^{(j)}(t)$, 
\begin{align}
  \dot J^{(j)}(t)&=-\frac{\partial}{\partial \beta^{(j)}}F_{\hat Z^{(j)}(t)}\big(\beta^{(j)}(t);n,\lambda^{(j)}(\cen{(t)},t))\dot \beta^{(j)}(t)-\frac{\partial}{\partial \lambda^{(j)}}F_{\hat Z^{(j)}(t)}\big(\beta^{(j)}(t);n,\lambda^{(j)}(\cen{(t)},t))\dot \lambda^{(j)}(\cen{(t)},t)\notag\\
  \dot \lambda^{(j)}(\cen{(t)},t)&=2(\cen(t)-\mu^{(j)})^\top(\dot \cen(t)-\dot\mu^{(j)}(t))-2\frac{\norm{\cen(t)-\mu^{(j)}(t)}^2}{\sigma^{(j)}(t)}\dot \sigma^{(j)}(t),\notag
\end{align}
where $\beta^{(j)}(t)=\big(\frac{\rad_s^{(j)}(t)}{\sigma^{(j)}(t)}\big)^2$. It can be observed that 
$\mathcal{P}_{\hat{Z}^{(j)}}(x;n,\lambda^{(j)})=$\\
$\frac{\partial}{\partial \beta^{(j)}}F_{\hat Z^{(j)}(t)}\big(\beta^{(j)}(t);n,\lambda^{(j)}(\cen{(t)},t))$ is the PDF of the random variable $\hat Z^{(j)}(t)$. For better notational clarity, we will represent it as $\mathcal{P}_{\hat{Z}^{(j)}}$ and $F_{\hat Z^{(j)}(t)}$.

Next, we substitute the expression of $\dot \cen(t)$ from \eqref{cen_dynamic} in the above equation:
\begin{align}
\dot J^{(j)}(t)&=-\mathcal{P}_{\hat{Z}^{(j)}}\dot \beta^{(j)}(t)-2k_1\frac{\partial}{\partial \lambda^{(j)}}F_{\hat Z^{(j)}(t)}(\cen(t)-\mu^{(j)})^\top (\eta-\cen(t))^{\frac{1}{3}}\notag\\
-2k_{2,j}&\frac{\partial}{\partial \lambda^{(j)}}F_{\hat Z^{(j)}(t)}\frac{\norm{\cen(t)-\mu^{(j)}(t)}^2}{q^{(j)}(\cen(t),t)-\varepsilon^{(j)}}-2k_{3,j}\frac{\partial}{\partial \lambda^{(j)}}F_{\hat Z^{(j)}(t)}(\cen(t)-\mu^{(j)})^\top v^{(j)}\notag\\
&+2\frac{\partial}{\partial \lambda^{(j)}}F_{\hat Z^{(j)}(t)}(\cen(t)-\mu^{(j)})^\top\dot\mu^{(j)}(t)-2\frac{\norm{\cen(t)-\mu^{(j)}(t)}^2}{\sigma^{(j)}(t)}\dot \sigma^{(j)}(t),\notag
\end{align}
Now we analyze the behavior of $\dot J^{(j)}(t)$ as $q^{(j)}(\cen(t),t)\rightarrow \varepsilon^{(j)}$.

From \eqref{prop2}, all terms in the expression for $\dot J^{(j)}(t)$ remain finite except the third term. As $q^{(j)}(\cen(t),t)$ approaches $\varepsilon^{(j)}$, the denominator of this third term tends to zero, causing the term to dominate. Furthermore, \eqref{prop2} gives $\frac{\partial}{\partial \lambda^{(j)}}F_{\hat Z^{(j)}(t)}<0$ and by definition $k_{2,j}>0$, which together imply 
$\dot J ^{(j)}(t)>0, \ \text{ whenever } \ q^{(j)}(\cen(t),t)\rightarrow \varepsilon^{(j)}$
Since, $J^{(j)}(0)>0$, we conclude
$J^{(j)}(t)>0 \implies q^{(j)}(\cen(t),t)>\varepsilon^{(j)}, \ \forall t \in \R_0^+.$ 
Hence, the STT center $\cen^{(t)}$ maintains a minimum separation from the region where the probability of collision with the uncertain obstacle is greater than $1-\varepsilon^{(j)}$. Repeating this argument for all $j \in [1 ;n_o]$, we can conclude that the STT center maintains the required probabilistic separation from each uncertain unsafe set for all time.

\textbf{Part 2:} In this part, we will show that 
\begin{align}
\rad(t)\leq \min_{j=[1; n_o]}\hat d ^{(j)},\forall t \in \R_0^+.\label{eqn:rad_cond}
\end{align}
We consider the following two cases for that:

\textbf{Case 1}: $\rad_{max}\leq \hat d ^{(j)},\forall j \in [1,n_o]$

From \eqref{eqn:rad:inequality}, it follows directly that
$\rad(t)\leq \rad_{max} \leq \min_{j=1,..,n_o}\hat d^{(j)}(t)).$

\textbf{Case 2:} $\rad_{max}> \hat d^{(j)}(t))$, for some $\hat j\in[1,n_o]$

Using the solution~\eqref{eqn:rad_sol} and the corresponding inequality~\eqref{eqn:rad:inequality}:
\begin{align}
  \rad(t)&=-\frac{1}{\nu}ln(\e^{-\nu\rad_{max}}+e^{-\nu d(t)})\leq\min(\rad_{max},\min_{j=1,..,n_o}\hat d^{(j)}(t))\leq\hat{d}^{(\hat j)}(t),
\end{align}
Therefore, in both the cases condition~\eqref{eqn:rad_cond} is satisfied, concluding the second part of our proof.

(iii) We had already established that $q^{(j)}(\cen(t),t)>\varepsilon^{(j)},\forall t \in\R_0^+$, so we use this condition in \eqref{eqn:m_crv} along with the monotonically increasing property of the CDF $F_{\hat Z^{(j)}}$ to get that $d(t)>\rad_{min},\forall t \in \R_0^+$. Substituting this into the solution of the $\rad(t)$ dynamics, we get:
\begin{align}
  \rad(t)>-\frac{1}{\nu}ln(\e^{-\nu\rad_{max}}+e^{-\nu \rad_{min}})>0, \quad \forall t \in \R_0^+.\nonumber
\end{align}
This implies that the tube radius remains strictly positive at all times.\\
Hence, the STT $\Gamma(t)$ reaches the target $\T$ in finite time $t_c$, avoids the unsafe set $\U(t)$ with a user-defined probability level, and maintains a guaranteed positive radius throughout. This completes the proof that the proposed STT satisfies the PrT-RAS specification.
\end{proof}
In order to satisfy the given PrT-RAS specification by any system \eqref{eqn:sysdyn}, the problem now boils down to designing a controller that constrains the output trajectory within the STT, i.e.,
\begin{align}\label{eqn:STT_const}
  y(t)\in \Gamma(t), \forall t \in \R_0^+.
\end{align}
\begin{lemma}\label{lemma_c}
  The tube center $\cen(t)$, radius $\rad(t)$ and their time derivatives $\dot{\cen}(t)$ and $\dot{\rad}(t)$ are all continuous and bounded for all $t \in \R_0^+$.
\end{lemma}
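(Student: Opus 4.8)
The plan is to establish continuity and boundedness separately for the center and the radius, treating the two defining ODEs \eqref{cen_dynamic} and \eqref{eq:rad} term by term. A useful preliminary observation is that once the right-hand sides of both ODEs are shown to be continuous in $t$ (after ruling out their potential singularities), standard existence theory guarantees that $\cen(t)$ and $\rad(t)$ are $C^1$, so continuity of $\dot\cen(t)$ and $\dot\rad(t)$ follows automatically; the substantive work is therefore in the boundedness estimates and in verifying that no denominator vanishes along the trajectory.

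For the center, I would decompose $\dot\cen$ into the target-pulling term $k_1(\eta-\cen(t))^{1/3}$ and the avoidance terms $\sum_j (k_{2,j}m^{(j)}+k_{3,j}v^{(j)})\theta^{(j)}$. The cube-root term is continuous everywhere (including at $\cen=\eta$) and is bounded as soon as $\cen$ is confined to a compact set. The switching function $\theta^{(j)}$ is continuous because its two branches agree at the threshold $q^{(j)}=p_d^{(j)}$, and it is bounded above by $1/\varepsilon^{(j)}-1/p_d^{(j)}$; it is nonzero only on the compact region where $q^{(j)}\le p_d^{(j)}$. The only term that threatens unboundedness is $m^{(j)}=(\cen-\mu^{(j)})/(q^{(j)}-\varepsilon^{(j)})$, whose denominator could in principle approach zero. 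Here I would invoke Part 1 of the preceding theorem, which guarantees $q^{(j)}(\cen(t),t)>\varepsilon^{(j)}$; combined with the obstacle parameters $\mu^{(j)}(t),\sigma^{(j)}(t),\rad_o^{(j)}(t)$ being continuous and bounded (so that $q^{(j)}$ is continuous via Proposition \ref{prop:1}), this yields continuity of $m^{(j)}$. Finally, boundedness of $\cen$ itself follows by noting that outside a bounded neighborhood of the obstacles every $\theta^{(j)}$ vanishes, leaving only the pulling term, which is directed toward $\eta$; hence $\cen$ can never leave a compact set.

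For the radius, I would work directly from the closed-form solution \eqref{eqn:rad_sol}, $\rad(t)=-\frac{1}{\nu}\ln(\e^{-\nu\rad_{max}}+\e^{-\nu d(t)})$, which the theorem already sandwiches between $\rad_{min}$ and $\rad_{max}$, giving boundedness immediately. Continuity of $\rad$ reduces to continuity of $d(t)$, which is a smooth-min of the $\hat d^{(j)}(t)$ defined in \eqref{eqn:m_crv}; each $\hat d^{(j)}$ is continuous provided $\sigma^{(j)}(t)$ and $\rad_o^{(j)}(t)$ are continuous and the inverse CDF $F^{-1}_{\hat Z^{(j)}}$ is well-defined. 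For the derivative, I note that the weight $\e^{-\nu d}/(\e^{-\nu\rad_{max}}+\e^{-\nu d})$ lies in $(0,1)$ and the denominator is bounded below by $\e^{-\nu\rad_{max}}>0$, so $|\dot\rad|\le|\dot d|$ with no risk of division by zero; boundedness of $\dot\rad$ thus follows once $\dot d$ is bounded, which in turn reduces to boundedness of $\dot{\hat d}^{(j)}$ and hence to bounded time-derivatives $\dot\sigma^{(j)},\dot\rad_o^{(j)}$ of the obstacle data.

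I expect the main obstacle to be upgrading the strict inequality $q^{(j)}>\varepsilon^{(j)}$ into a \emph{uniform} positive lower bound $q^{(j)}(\cen(t),t)-\varepsilon^{(j)}\ge\delta^{(j)}>0$ holding for all $t$, since mere positivity does not by itself prevent $m^{(j)}$ from growing without bound along a sequence of times where $q^{(j)}\to\varepsilon^{(j)}$. To close this gap I would strengthen the barrier argument from Part 1: in a neighborhood $q^{(j)}-\varepsilon^{(j)}\in(0,c)$ the dominating third term forces $\dot J^{(j)}>0$, so the level $q^{(j)}-\varepsilon^{(j)}=\min(J^{(j)}(0),c)$ is never crossed from above, yielding the required uniform bound and hence bounding $m^{(j)}$. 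A secondary subtlety is ensuring that $v^{(j)}(t)$ is selected continuously from the null space of $m^{(j)}(t)$; this needs an explicit continuous construction (\eg projecting a fixed reference direction and normalizing) so that the avoidance term introduces no spurious discontinuities.
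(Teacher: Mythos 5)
Your proposal is correct and follows the same basic route as the paper's own proof, which compresses the entire argument into two sentences: continuity and boundedness of $\rad(t)$, $\dot\rad(t)$ from the smooth-min structure of \eqref{eq:rad}--\eqref{eqn:rad_sol}, and continuity and boundedness of $\cen(t)$, $\dot\cen(t)$ from the strict inequality $q^{(j)}(\cen(t),t)>\varepsilon^{(j)}$ established in Part~1 of the main theorem together with smooth dependence of \eqref{cen_dynamic} on $q^{(j)}$. Where you genuinely go beyond the paper is in your final paragraph, and the two issues you flag there are real gaps in the paper's argument, not artifacts of your approach: (a) strict positivity of $J^{(j)}(t)=q^{(j)}(\cen(t),t)-\varepsilon^{(j)}$ on $\R_0^+$ does not by itself exclude $\inf_{t}J^{(j)}(t)=0$, in which case $m^{(j)}$ and hence $\dot\cen$ would be unbounded, so the uniform lower bound $J^{(j)}(t)\geq\min(J^{(j)}(0),c)>0$ obtained by strengthening the barrier argument is exactly the missing step (note that in that argument the $v^{(j)}$ term drops out since $(\cen-\mu^{(j)})^\top v^{(j)}=0$, and that the dominance of the third term near the level set $q^{(j)}=\varepsilon^{(j)}$ uses that this set is a shell at positive distance from $\mu^{(j)}$, so $\|\cen-\mu^{(j)}\|^2\,|\partial F_{\hat Z^{(j)}}/\partial\lambda^{(j)}|$ stays bounded away from zero there); and (b) the paper never specifies a continuous selection of $v^{(j)}(t)$ from the null space of $m^{(j)}(t)$, without which $\dot\cen$ need not be continuous, so your explicit construction is needed. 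You also make explicit the regularity assumptions on the obstacle data $\mu^{(j)},\sigma^{(j)},\rad_o^{(j)}$ and their derivatives that both proofs implicitly require. In short, the paper's proof buys brevity by silently assuming the uniform separation and continuous-selection properties; your version supplies them, at the cost of length, and is the more rigorous of the two.
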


\begin{proof}
From the radius dynamics \eqref{eq:rad} and the smooth approximation of the $\min$ function, both $\rad(t)$ and $\dot{\rad}(t)$ are continuous and remain bounded for all $t \in \R_0^+$. Moreover, since $q^{(j)}(\cen(t),t)>\varepsilon^{(j)}$, for all $t \in \R_0^+$ and all $j\in [1,n_o]$, the center dynamics, which depend smoothly on the probability of avoiding the collision with unsafe sets and the target, ensure that $\cen(t)$ and $\dot{\cen}(t)$ are also continuous and bounded over the entire time horizon.
\end{proof}
\begin{remark}
{Modeling the obstacle center uncertainty with an isotropic covariance (Definition \eqref{defn:obs}) allows the collision-avoidance probability to be expressed using the CDF of a non-central chi-squared distribution. If the uncertainty is anisotropic, the formulation remains valid, but the avoidance probability can no longer be expressed using known distribution and must instead be estimated using sampling-based methods such as those in \cite{monte_carlo}.}
\end{remark}

\section{Controller Design}
In this section, we will derive an approximation-free, closed-form control law that constrains the output of the system \eqref{eqn:sysdyn} within the STT $\Gamma(t)=\B(\cen(t),\rad(t))$ in \eqref{cen_dynamic},\eqref{eq:rad}, ensuring \eqref{eqn:STT_const} is satisfied. We leverage the lower-triangular structure of \eqref{eqn:sysdyn} and adopt an approach similar to backstepping, as proposed in \cite{STT_real_time}.

We start by designing an intermediate control input $r_2$ for the $x_1$ dynamics (output of the system) to enforce the STT constraint. Then we will follow the methodology discussed in \cite{das2024spatiotemporal}, then iteratively we will construct a sequence of intermediate control inputs $r_{k+1}$ which will ensure that each state $x_k$ tracks its reference signal $r_k$ for all $k\in[2;N]$. The final control input $u$ to the system will be $r_{N+1}$.

The steps to design control input are as follows:

\textbf{Stage $1$:} Given the STT $\Gamma(t)$ as defined in Equation \eqref{eqn:stt_ball}, let the normalized error $e_1(x_1,t)$ and the transformed error $\rho_1(x_1,t)$ be given by
\begin{align}
e_1(x_1,t) &= \frac{\norm{x_1(t) - \cen(t)}}{\rad(t)}, \nonumber \quad
\rho_1(x_1,t) = \ln\left( \frac{1 + e_1(x_1,t)}{1 - e_1(x_1,t)} \right).\nonumber
\end{align}
The intermediate control input $r_2(x_1,t)$ is given by: 
\begin{equation*}
  r_2(x_1,t) = -\kappa_1 \rho_1(x_1,t) \left( x_1(t)-\cen(t) \right), \kappa_1 \in \R^+.
\end{equation*}

\textbf{Stage $k$} ($k \in [2;N]$): In order to ensure that $x_k$ tracks the reference signal $r_k$ from Stage k-1, we define a time varying bound: $\gamma_{k,i}(t) = (p_{k,i} - q_{k,i})e^{-\mu_{k,i}t} + q_{k,i}$, and enforce: 
\begin{align*}
  -\gamma_{k,i}(t) \leq (x_{k,i}-r_{k,i}) \leq \gamma_{k,i}(t) \ \ \forall (t,i) \in \R_0^+ \times [1;n],
\end{align*}
where, $\mu_{k,i} \in \R_0^+$, and $p_{k,i}, q_{k,i} \in \R^+$ with $p_{k,i} > q_{k,i}$ are chosen such that the initial error is within the bounds: $|x_{k,i}(0) - r_{k,i}(0)| \leq p_{k,i}$.

Now, define the normalized error $e_k(x_{k},t)$, the transformed error $\rho_k(x_{k},t)$ and the diagonal matrix $\xi_k(x_{k},t)$ as
\begin{subequations} \label{eq:stage k}
  \begin{align}
   e_k(x_{k},t) &= [e_{k,1}(x_{k,1},t), \ldots, e_{k,n}(x_{k,n},t)]^\top \\
  &= (\textsf{diag}(\gamma_{k,1}(t),\ldots,\gamma_{k,n}(t)))^{-1} \left(x_{k} - r_k \right), \notag \\
  \rho_k(x_{k},t) &= \big[\ln\left(\frac{1+e_{k,1}(x_{k,1},t)}{1-e_{k,1}(x_{k,1},t)}\right), \ldots, \ln\left(\frac{1+e_{k,n}(x_{k,n},t)}{1-e_{k,n}(x_{k,n},t)}\right) \big]^\top, \\
    \xi_k(x_{k},t) &= 4 \big(\textsf{diag}(\gamma_{k,1}(t),\ldots,\gamma_{k,n}(t)) \big)^{-1}(I_n-\textsf{diag}(e_k \circ e_k))^{-1}.
\end{align}
\end{subequations}

The next intermediate control input $r_{k+1}(\bar x_k,t)$ is then:
\begin{equation*}
  r_{k+1}(\bar x_k,t) = - \kappa_k\xi_k(x_{k},t)\rho_k(x_{k},t), \kappa_k \in \R^+.
\end{equation*}
At the $N$-th stage, this intermediate input becomes the actual control input:
\begin{equation*}
  u(\bar x_N,t) = - \kappa_N\xi_N(x_{N},t)\rho_N(x_{N},t), \kappa_N \in \R^+.
\end{equation*}

We now state the main theorem, which guarantees that this controller enforces the desired PrT-RAS behavior.
\begin{theorem} \label{theorem_ras}
Consider the nonlinear MIMO system in \eqref{eqn:sysdyn} satisfying Assumptions \ref{assum:lip} and \ref{assum:pd}, a probabilistic temporal reach-avoid-stay (PrT-RAS) specification as defined in Definition \ref{def:prtras} in the presence of a probabilistic unsafe set defined in Definition \eqref{defn:obs} with pre-defined value of $\varepsilon^{(j)},\forall j \in [1;n_o]$, and the corresponding STT $\Gamma(t)$ as defined in Equation \eqref{eqn:stt_ball}.
  
  If the initial output is within the STT at time $t=0$: $y(0) \in \Gamma(0)$, then the closed-form control laws
  \begin{subequations}\label{eqn:Control_ras}
   \begin{align}
    r_2(x_1,t) &= -\kappa_1 \rho_1(x_1,t) \left( x_1(t)-\cen(t) \right), \kappa_1 \in \R^+, \\
    r_{k+1}(\bar x_k,t) &= - \kappa_k\xi_k(x_{k},t)\rho_k(x_{k},t), k \in [2;N-1], \\
    u(\bar x_N,t) &= - \kappa_N\xi_N(x_{N},t)\rho_N(x_{N},t),
  \end{align}  
  \end{subequations}
  ensure that the system output remains within the STT: 
  $y(t) = x_1(t) \in \Gamma(t), \forall t \in \R^+_0,$
  thereby satisfying the desired PrT-RAS specification.
\end{theorem}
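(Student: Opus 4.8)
The plan is to pass to the transformed error coordinates and run a prescribed-performance / funnel argument stage by stage, exploiting the lower-triangular (pure-feedback) structure of \eqref{eqn:sysdyn}. First I would collect the normalized errors into a single vector and define the open set
$\Omega := \{(e_1,e_{2,1},\ldots,e_{N,n}) : 0 \le e_1 < 1,\ |e_{k,i}| < 1,\ k\in[2;N],\ i\in[1;n]\}$
on which every transformed error $\rho_1$ and $\rho_{k,i}$ is well defined and finite. The hypothesis $y(0)\in\Gamma(0)$ gives $e_1(0)<1$, and the design choice $|x_{k,i}(0)-r_{k,i}(0)|\le p_{k,i}$ with $p_{k,i}>q_{k,i}$ places each higher-stage error strictly inside its funnel, so the initial condition lies in the interior of $\Omega$.

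Next I would establish local existence and uniqueness of a maximal solution. Substituting the control laws \eqref{eqn:Control_ras} into \eqref{eqn:sysdyn} and rewriting in the $e$-coordinates yields a closed-loop system whose right-hand side is, by Assumption \ref{assum:lip} (local Lipschitzness of $f_i,g_i$) together with Lemma \ref{lemma_c} (continuity and boundedness of $\cen,\rad,\dot\cen,\dot\rad$), locally Lipschitz in $e$ and piecewise continuous in $t$ on $\R_0^+\times\Omega$. By the standard existence–uniqueness theorem there is then a unique maximal solution on an interval $[0,\tau_{\max})$, and by the maximal-solution extension dichotomy, if $\tau_{\max}<\infty$ the solution must leave every compact subset of $\Omega$; concretely, some normalized error must approach the boundary, i.e.\ $e_1\to 1$ or $|e_{k,i}|\to 1$ as $t\to\tau_{\max}$.

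The core of the proof is to rule this out via a Lyapunov-type estimate on each stage, carried out by backstepping from Stage $1$ upward. At Stage $1$, writing $S = x_1-\cen$ and differentiating $\tfrac12\norm{S}^2$, the contribution of the intermediate input enters as $-\kappa_1\rho_1\, S^\top g_{1,s}(\overline x_1)\,S \le -\kappa_1\rho_1\underline{g_1}\norm{S}^2$ by Assumption \ref{assum:pd}, while all remaining terms — $f_1$, $w_1$, $\dot\cen$, $\dot\rad$, and the mismatch $g_1(x_2-r_2)$ — stay bounded on $[0,\tau_{\max})\times\Omega$ because the states remain in a compact set and $x_2-r_2$ is confined by the funnel $\gamma_{2,i}$. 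Since $\rho_1\to+\infty$ as $e_1\to1$, the stabilizing term eventually dominates the bounded perturbations, forcing $\dot e_1<0$ near the boundary and thus keeping $e_1$ bounded away from $1$. An identical computation at Stage $k$ for the transformed vector $\rho_k$, using the gain $\xi_k$ (which blows up through the $(I_n-\textsf{diag}(e_k\circ e_k))^{-1}$ factor as any $|e_{k,i}|\to1$) and again invoking $\lambda_{\min}(g_{k,s})\ge\underline{g_k}>0$, shows each $|e_{k,i}|$ stays strictly below $1$. The hard part will be the bookkeeping of this cascade: one must check that the reference derivative $\dot r_k$ and the residual tracking errors $x_{k+1}-r_{k+1}$ remain bounded on $\Omega$, so that the blow-up of the transformed error is not matched by a blow-up of the perturbation — this holds because $r_k$ and $\dot r_k$ are smooth functions of the (bounded) states, the tube signals, and the funnel bounds.

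Finally, the boundedness of every $e_{k,i}$ away from $\pm1$ contradicts the boundary-approach conclusion of the extension dichotomy, so $\tau_{\max}=\infty$ and the solution exists for all time with $e_1(t)<1$, i.e.\ $\norm{x_1(t)-\cen(t)}<\rad(t)$, which is precisely $y(t)=x_1(t)\in\Gamma(t)$ for all $t\in\R_0^+$. Combining this with the previous theorem, which guarantees that $\Gamma(t)$ meets every condition of Definition \ref{def:prtras}, the output trajectory inherits the PrT-RAS guarantee, completing the proof.
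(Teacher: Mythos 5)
Your proposal is correct and follows essentially the same route as the paper, which omits the details and states that the proof "follows the same arguments as \cite[Theorem 4.1]{STT_real_time}" --- namely the standard prescribed-performance/funnel-control argument you reconstruct: transformed errors on an open set, existence of a maximal solution, a stage-by-stage domination estimate using Assumption~\ref{assum:pd} and the blow-up of $\rho_k$, $\xi_k$ near the funnel boundary, and the maximal-solution extension dichotomy to conclude $\tau_{\max}=\infty$ and hence $y(t)\in\Gamma(t)$ for all $t\in\R_0^+$. The only point to keep in mind is that the cascade bookkeeping must be done sequentially (stage $1$ bounds $\rho_1$ and hence $r_2,\dot r_2$ before stage $2$ is treated), which your "backstepping from Stage 1 upward" phrasing already captures.
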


\begin{proof}
The proof of the correctness of the control law  has been omitted due to lack of space and follows the same arguments as \cite[Theorem 4.1]{STT_real_time}.
\end{proof}

\section{Case Studies}
To validate the effectiveness of the proposed real-time navigation in an uncertain environment using the STT framework, we present three case studies: a 2D omnidirectional mobile robot, a 3D UAV, and a 7-DOF manipulator. The results include the hardware experiments for the mobile robot and manipulator case, which will demonstrate their real-world applicability.
\subsection{2D Omnidirectional Robot}

\textbf{Hardware Experiments:} To demonstrate the effectiveness of the proposed framework in a real-world scenario, we conducted hardware experiments with an omnidirectional robot operating in an uncertain dynamic environment. The task assigned to the robot is to start from the given initial set and reach the target set, while avoiding four uncertain obstacles with different levels of uncertainty $(\sigma^{(j)})^2$, and minimum probability levels $\varepsilon^{(j)},\forall j\in[1;4]$, the values of $((\sigma^{(j)})^2,\varepsilon^{(j)})$ used in each of the case is mentioned in the first time stamp of the Figures \ref{fig:combined_hw}. In all cases, the radius of each obstacle is kept constant at $\rad^{(j)}_o=0.15m$. The tube radii were bounded by $\rad_{min}=0.21$, $\rad_{max}=0.27m$, the center dynamics parameters in \eqref{cen_dynamic} are chosen as follows: $k_1=0.7$,$k_{2,j}=k_{3,j}=0.4,\forall j \in [1;4]$ and we choose $p_d^{(j)}=0.9999,\forall j \in [1;n_o]$.

To investigate how different uncertainty levels impact the system, we examine three cases. The obstacles share the same initial position, velocity, and radius in all three cases, and only $(\sigma^{(j)})^2$ and $\varepsilon^{(j)}$ are changed between cases and the values used for each cases are shown in the Figures \ref{fig:combined_hw}. We illustrate the uncertainty around each obstacle using three red-shaded regions corresponding to the $p_d^{(j)}$, $\varepsilon^{(j)}$, and $0.7$ sub-level sets of $\hat{q}(x,t)$. To ensure avoidance with a probability of at least $\varepsilon^{(j)}$, the tube must stay outside the middle region at all times. Although the initial position and velocity of each obstacle remain the same across all cases, the robot’s trajectory (shown in blue in Cases~$1-3$ in Figure~\ref{fig:combined_hw} differs significantly. This variation is primarily driven by the uncertainty level and the user-defined probability threshold $\varepsilon^{(j)}$. Obstacles with higher uncertainty or higher $(\sigma^{(j)})^2$ values produce larger probabilistic avoidance regions (larger sub-level sets of $\hat q(x,t)$), leading the robot to steer farther away from their mean positions. The full video of the hardware experiment is available at \href{https://youtu.be/-SaFbrn9BJM}{Link}.

\begin{figure}[]
  \centering
  \begin{subfigure}{0.95\linewidth}
    \centering
    \includegraphics[width=\linewidth]{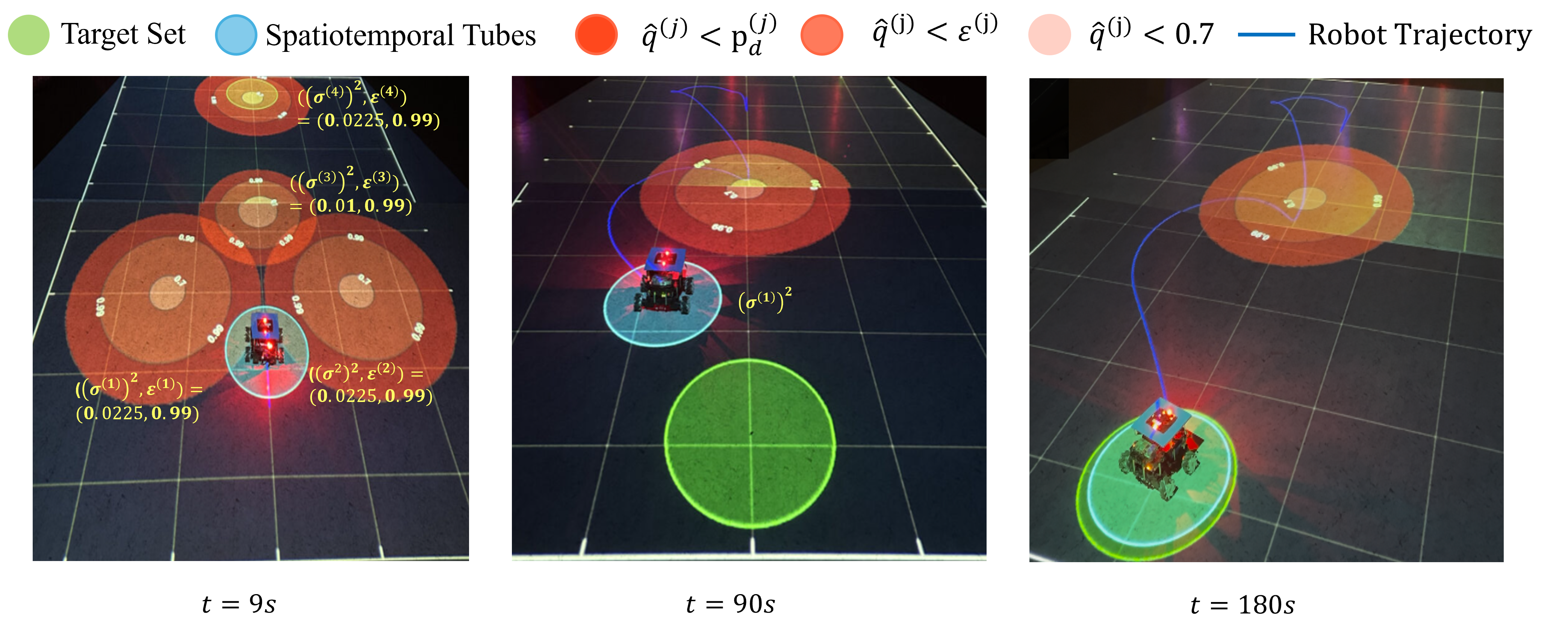}
    \caption{Case 1}
    \label{fig:2d_hw_case1}
  \end{subfigure}
  \centering
  \begin{subfigure}{0.95\linewidth}
    \centering
    \includegraphics[width=\linewidth]{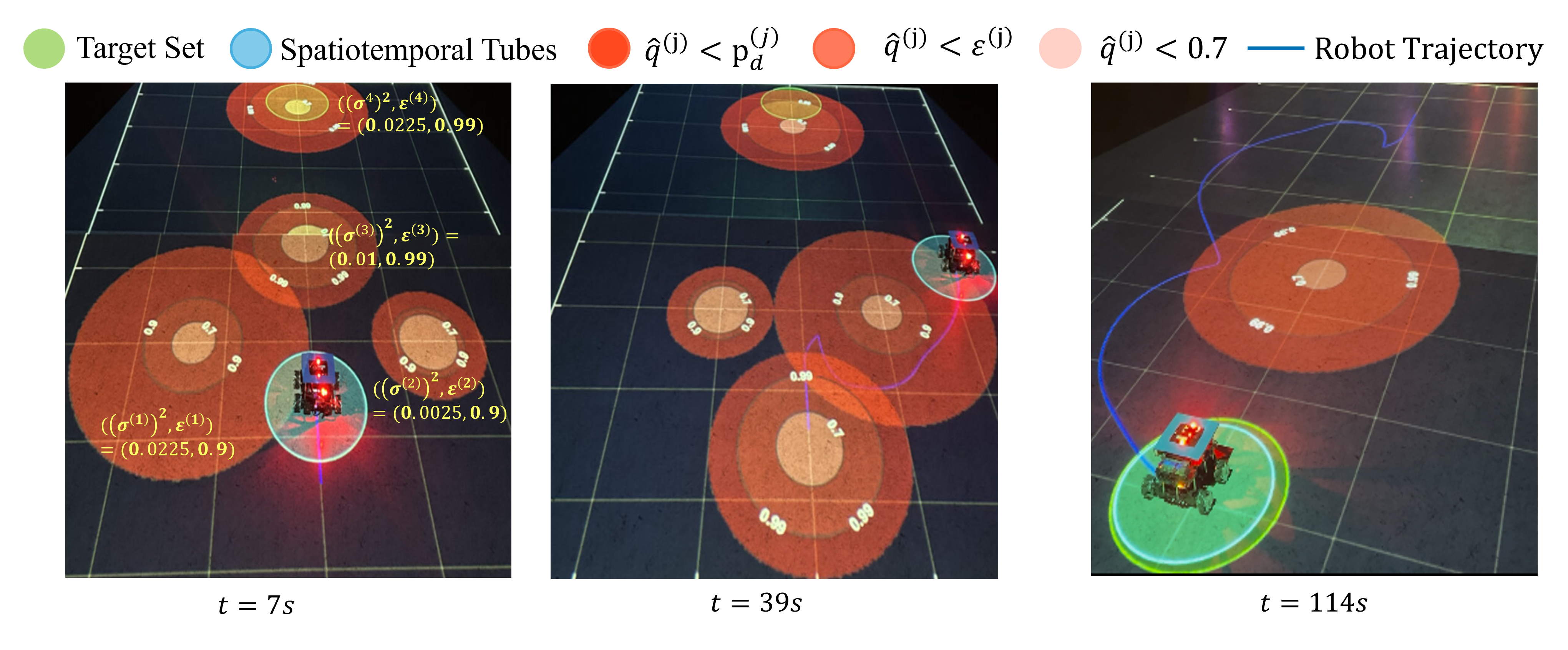}
    \caption{Case 2}
    \label{fig:2d_hw_case2}
  \end{subfigure}
  \begin{subfigure}{0.95\linewidth}
    \centering
    \includegraphics[width=\linewidth]{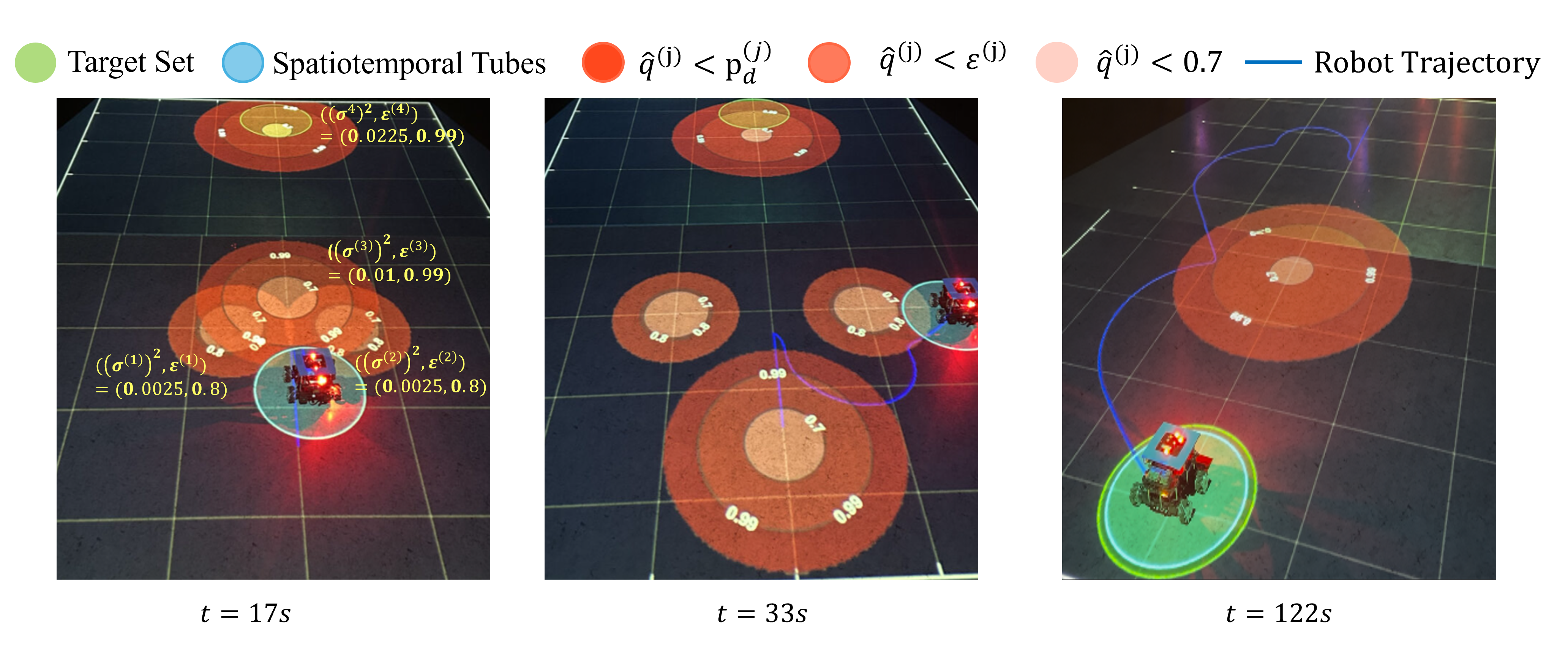}
    \caption{Case 3}
    \label{fig:2d_hw_case3}
  \end{subfigure}
  \caption{Hardware demonstration of a omnidirectional robot in three different dynamic uncertain environment, \href{https://youtu.be/-SaFbrn9BJM}{Video}}
  \label{fig:combined_hw}
\end{figure}

\textbf{Simulation Studies:} To demonstrate the effectiveness of the proposed approach in cluttered environments, we consider a scenario in which an omnidirectional robot, with dynamics adapted from \cite{NAHS}, navigates an environment containing $n_o = 50$ uncertain obstacles. The uncertainty level, mean position, and probability parameter $\varepsilon^{(j)}$ for each obstacle are selected at random, while their radii are kept constant to facilitate visualization of how probabilistic avoidance varies with uncertainty $(\sigma^{(j)})^2$. The robot’s task is identical to the hardware experiment, and it reaches the target set at $t_c = 55\,\text{s}$, as shown in Figure\ref{fig:2d_sim_1}. In this case the tube radii is bounded between $\rad_{\min} = 0.1$ and $\rad_{\max} = 0.8$. The center dynamics parameters in \eqref{cen_dynamic} are chosen as $k_1 = 0.35$ and $k_{2,j} =0.4, k_{3,j} = 0.8$ for all $j \in [1;4]$. In Figure~\ref{fig:2d_sim_1}, the uncertain unsafe regions are represented using the $\varepsilon^{(j)}$ and $0.7$ sub-level sets of $\hat{q}^{(j)}(x,t)$. The robot's trajectory (black solid line) in three different time instances, along with the tube synthesis with time, is shown in Figure \ref{fig:2d_sim_1}, it is clearly evident that the robot takes a longer detour when avoiding a more uncertain obstacle. The full simulation video is available at \href{https://youtu.be/-SaFbrn9BJM}{link}.
\begin{figure}[]
  \centering
  \includegraphics[width=1\linewidth]{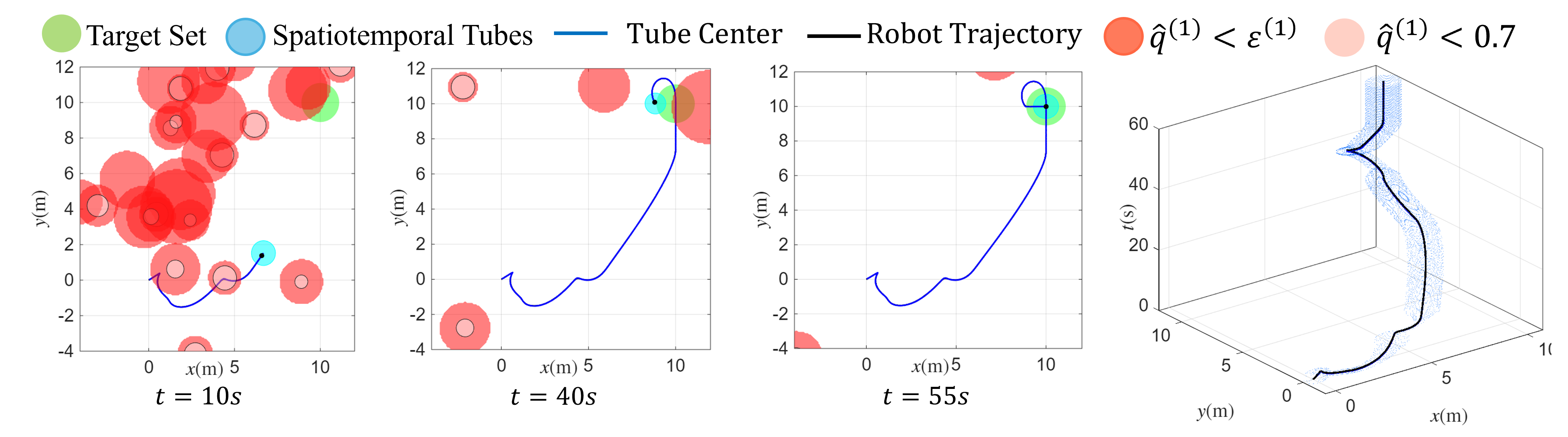}
  \caption{Simulation of an omnidirectional mobile robot in a 2D uncertain environment, along with a 3D plot showing tube synthesis with time $t$,  \href{https://youtu.be/-SaFbrn9BJM}{Video}}
  \label{fig:2d_sim_1}
\end{figure}
{Results shown in Figure \ref{fig:2d_sim_1} not only illustrates probabilistic avoidance but also shows why fixed-time convergence is useful in environments with dynamic obstacles. As seen in the \href{https://youtu.be/-SaFbrn9BJM}{Video}, the robot reaches the target set at about $27\,\text{s}$, leaves it to avoid a newly approaching uncertain obstacle, and returns to the set by $55\,\text{s}$. Such behavior would be difficult to accommodate under a prescribed-time convergence requirement.}

\subsection{UAV}
We consider a UAV navigating in an uncertain 3D environment, where the UAV follows the second-order dynamics adapted from \cite{STT_real_time}. The tube radius is bounded between $\rad_{\min} = 0.3$ and $\rad_{\max} = 0.9$. The center dynamics parameters in \eqref{cen_dynamic} are chosen as $k_1 = 0.31$ and $k_{2,j} = k_{3,j} = 0.03$ for all $j \in [1;4]$.
\begin{figure}[t]
  \centering
  \includegraphics[width=0.86\linewidth]{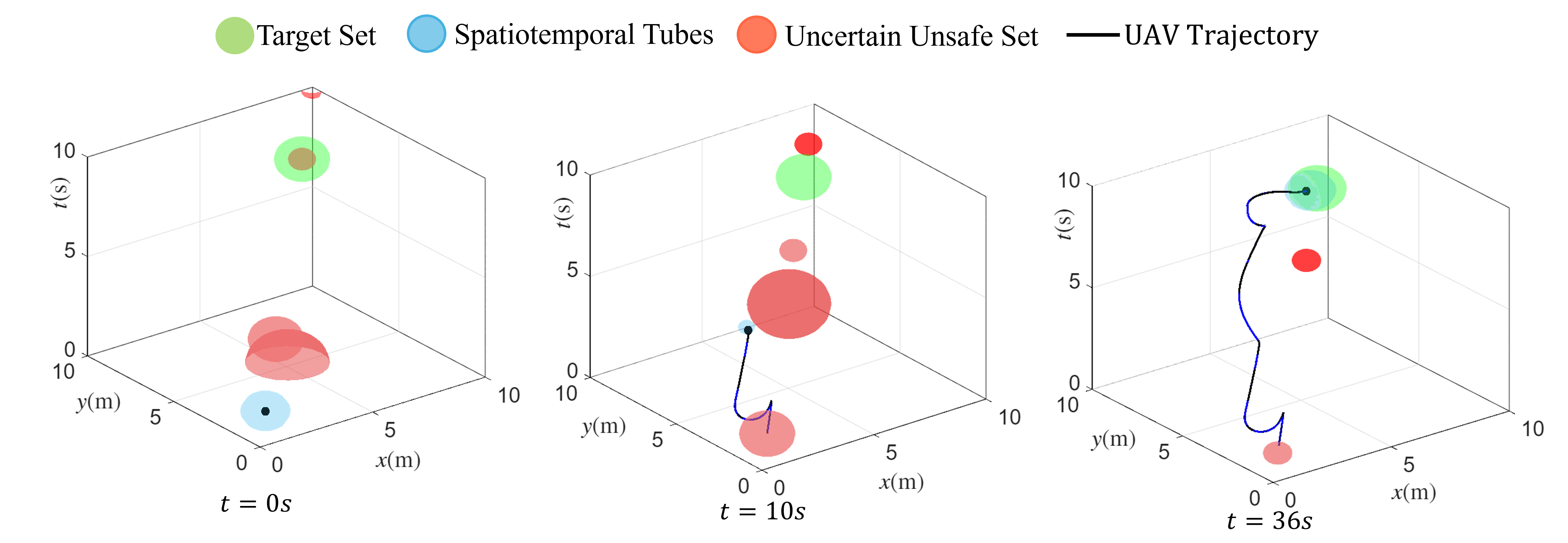}
  \caption{UAV simulation in a 3-D uncertain environment, where obstacles with higher uncertainty or alternatively higher $(\sigma^{(j)})^2$ values are depicted as dark red spheres, and those with lower uncertainty appear in lighter shades,  \href{https://youtu.be/-SaFbrn9BJM}{Video}}
  \label{fig:sim_3d}
\end{figure}
In Figure~\ref{fig:sim_3d}, obstacles with higher uncertainty are depicted in dark red, while those with lower uncertainty appear in lighter red. Unlike the 2D cases, we do not visualize the probabilistic unsafe regions directly; instead, each obstacle is represented as a ball centered at its mean with an arbitrarily chosen radius $\rad_o$, and their velocity, radius, uncertainty in position are selected randomly.

Figure~\ref{fig:sim_3d} illustrates the UAV’s trajectory in black solid lines along with the STT  at three time instances. Similar to the 2D scenario, the UAV takes a wider detour and behaves more cautiously around obstacles with higher uncertainty and larger $\varepsilon^{(j)}$, while it approaches closer to obstacles with lower uncertainty and smaller $\varepsilon^{(j)}$. A full simulation video is available at \href{https://youtu.be/-SaFbrn9BJM}{link}.

\subsection{7-DOF Manipulator}
The robotic manipulator used in our case study is a 7-DOF Franka Research 3 arm, commonly used for human–robot collaboration. We conduct two main experiments. In the first, the robot performs a pick-and-place task with two uncertain obstacles, repeating the trial with swapped uncertainty levels to observe how varying obstacle uncertainty affects the trajectory. In the second experiment, we use the same obstacle setup but introduce external jerks to evaluate the controller’s robustness to disturbances.

\textbf{Without Disturbance:} In this experiment the robot is assigned a pick-and-place task in the presence of uncertain obstacles as shown in Figure~\ref{fig:man_combined}. In Case 1 (Figure \ref{fig:man_2}), the blue obstacle has a higher uncertainty compared to the yellow one. As a result, the manipulator’s end effector takes a wider detour around the blue obstacle. When the uncertainty values are swapped, as illustrated in Case 2 (Figure \ref{fig:man_3}), the manipulator passes closer to the second obstacle and takes a larger detour around the other one. The complete video is available at \href{https://youtu.be/-SaFbrn9BJM}{link}

\textbf{With Disturbance:} In order to test the framework in the presence of external disturbance in the system, we consider the same task as in the case of without disturbance in the presence of the uncertain obstacle and apply sudden jerks to demonstrate disturbance rejection. In all cases, the controller achieved the required specification, as shown in the video available at \href{https://youtu.be/-SaFbrn9BJM}{link}
\begin{figure}[t]
  \centering
  \begin{subfigure}{1\linewidth}
    \centering
    \includegraphics[width=0.8\linewidth]{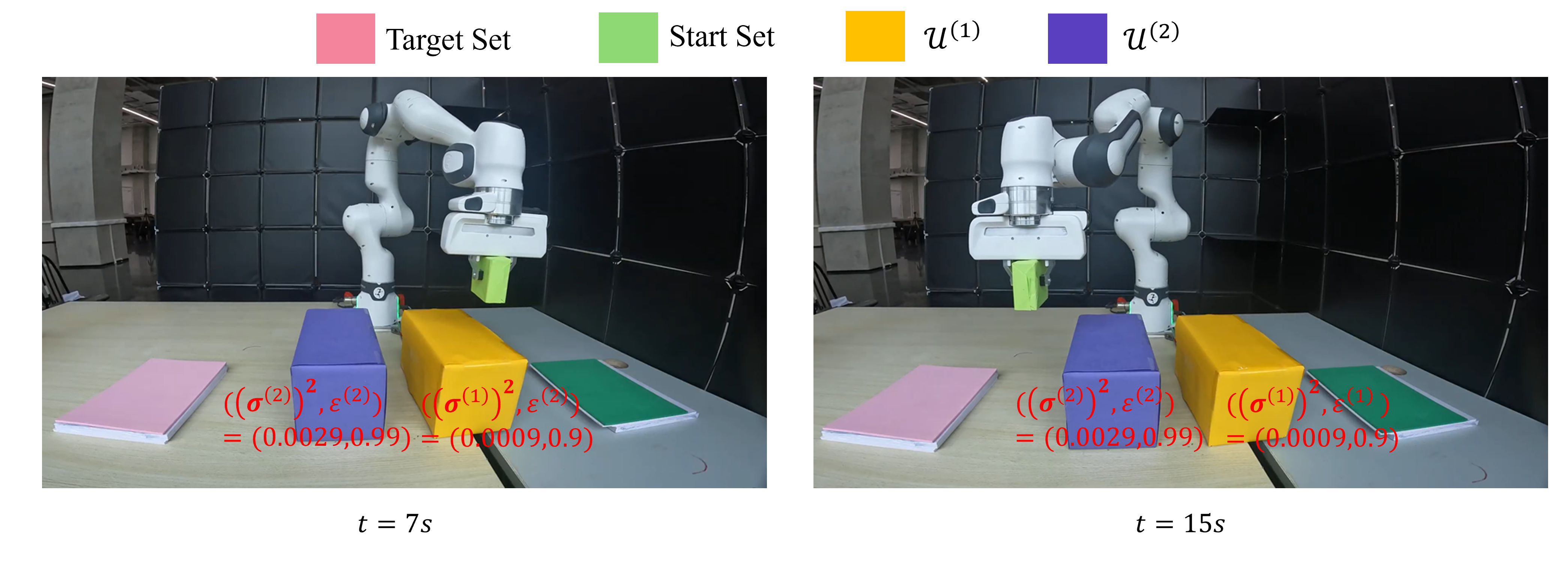}
    \caption{Case 1: uncertainty or $(\sigma^{(j)})^2$ associated with the first obstacle is lower than the second obstacle}
    \label{fig:man_2}
  \end{subfigure}

  \begin{subfigure}{1\linewidth}
    \centering
    \includegraphics[width=0.8\linewidth]{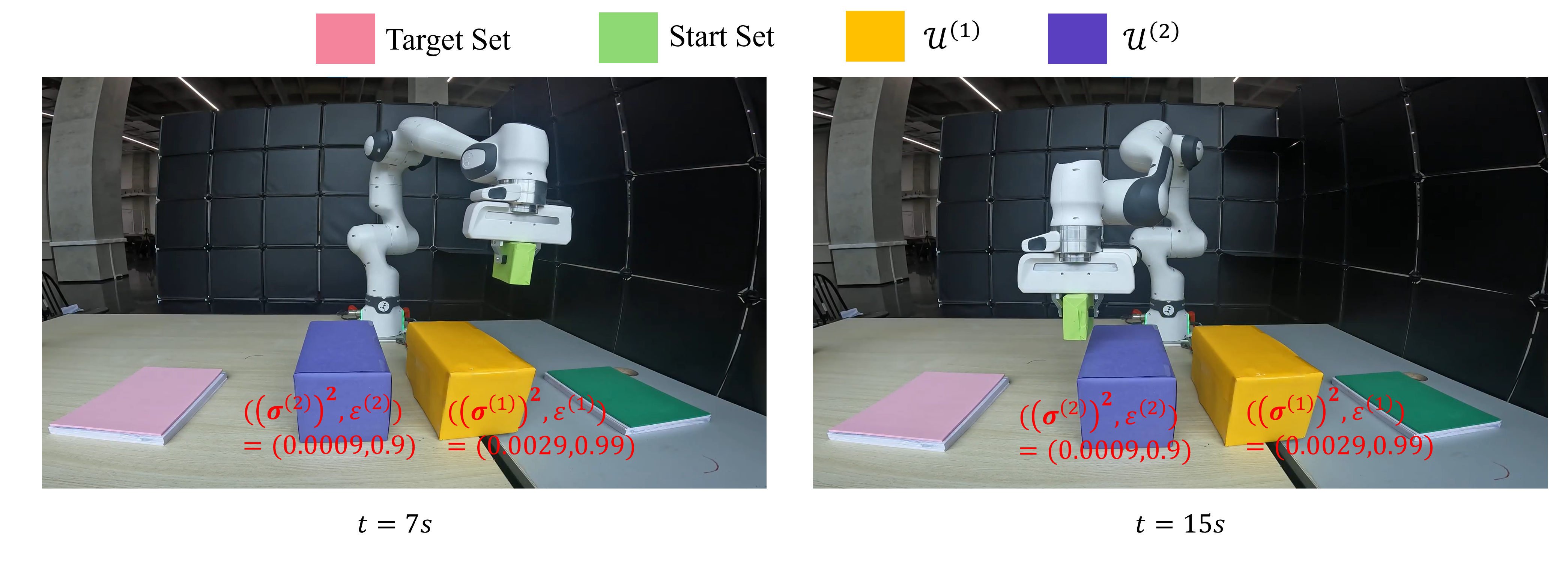}
    \caption{Case 2: uncertainty or $(\sigma^{(j)})^2$ associated with the first obstacle is higher than the second obstacle}
    \label{fig:man_3}
  \end{subfigure}

  \caption{Snapshot of hardware experiment at two different time stamps of a 7-DOF manipulator assigned with the task of pick and place in two different uncertain environments, \href{https://youtu.be/-SaFbrn9BJM}{Video}}
  \label{fig:man_combined}
\end{figure}
\section{Discussion}
Despite increases in system dimensionality from a 2-D mobile robot to a 14-D (7-DoF) manipulator and substantial growth in environmental complexity, ranging from 4 to 50 obstacles, the framework consistently achieves real-time performance. Hardware experiments naturally introduce unknown disturbances and sensor noise, with the manipulator additionally experiencing abrupt external jerks, thereby demonstrating the robustness of the proposed method to system-level uncertainties. Furthermore, across all scenarios and varying levels of perception uncertainty, the approach reliably maintains probabilistic safety, underscoring the formal, provable probabilistic guarantees afforded by the STT framework.

\section{Conclusion}
 In this work, we presented a real-time framework for synthesizing Spatiotemporal Tubes (STTs) that ensure safe control of nonlinear pure-feedback systems with unknown dynamics under PrT-RAS tasks in the presence of uncertain unsafe sets. The proposed method continuously adapts the tube based on real-time perceptual information, adjusting its center and radius to satisfy avoidance requirements with a user-specified minimum probability. We establish formal guarantees of probabilistic safety and finite-time convergence, and we demonstrate the effectiveness and scalability of the approach through simulations on a 2D mobile robot and a 3D UAV in cluttered uncertain environments, as well as hardware experiments on a 2D mobile robot and a 7-DOF manipulator.

In this work, we assumed Gaussian uncertainty for the unsafe sets; future research will explore a distributionally robust formulation. We also plan to extend the framework to incorporate high-level temporal logic specifications such as Probabilistic LTL (pLTL) and Probabilistic Signal Temporal Logic (PrSTL).

\bibliographystyle{unsrt} 
\bibliography{sources} 

\appendix
\section{Proof of Proposition \eqref{prop:1}}\label{appendix_1}
\begin{proof}
  From Definition \eqref{defn:obs} we have $O_p^{(j)}(t) \sim \mathcal{N}\Big(\mu^{(j)}(t), \Sigma^{(j)}(t)\Big)$. Using the linearity of Gaussian distributions, we introduce the normalized random vector defined as $Z^{(j)}(t)=[Z_1^{(j)}(t),\ldots,Z^{(j)}_n(t)]^\top$ 
\begin{align}
  Z^{(j)}(t)=\frac{\cen(t)-O_p^{(j)}(t)}{\sigma^{(j)}(t)} \sim \mathcal{N}\Big(\frac{\cen(t)-\mu^{(j)}(t)}{\sigma^{(j)}(t)}, \mathbf{I}_n\Big),
\end{align}
where $Z^{(j)}(t)$ is a standard normal Gaussian vector.
Next, we define a random variable:
\begin{align}
    \hat Z^{(j)}(t)=\norm{Z^{(j)}(t)}^2=\sum_{i=1}^{n}({Z_i^{(j)}(t)})^{2},\label{eqn:z_norm}
\end{align}
 which is the sum of squares of independent Gaussian random variables with nonzero means and unit variance. Therefore, by \cite{chi_compute}, $\hat Z^{(j)} $ follows a non-central chi-square distribution with degrees of freedom equal to the dimension of the state space and non centrality parameter $\lambda^{(j)}(\cen{(t)},t)=\frac{\norm{\cen(t)-\mu^{(j)}(t)}^2}{(\sigma^{(j)}(t))^2}$. 
By applying basic algebraic manipulation to \eqref{eqn:prob_q}, we obtain
\begin{align}
    q^{(j)}(\cen(t),t)=1-\mathbb{P} \Bigg( \frac{\norm{\cen(t)-O_p^{(j)}(t)}^2}{(\sigma^{(j)}(t))^2}<\frac{(\rad_s^{(j)})^2(t)}{(\sigma^{(j)}(t))^2}\Bigg)\notag
\end{align}
then using definition of $\hat Z^{(j)}(t)$ in \eqref{eqn:z_norm} we can rewrite the expression of $q^{(j)}(\cen(t),t)$ in terms of the CDF $F_{\hat Z^{(j)}}$ as:
 \begin{align}
     &q^{(j)}(\cen(t),t)=1-F_{\hat Z^{(j)}(t)}\Big(\big(\frac{\rad_s^{(j)}(t)}{\sigma^{(j)}(t)}\big)^2;n,\lambda^{(j)}(\cen{(t)},t)\Big). \notag
 \end{align}
\end{proof}
\section{Non-central chi squared distribution}\label{prop2}
\begin{proposition}
  The Cumulative Distribution function (CDF) and Probability Distribution Function (PDF) of a non-central chi-squared distribution follow the following property.
  \begin{enumerate}
  \item The PDF of the noncentral chi-square random variable, 
  $\mathcal{P}_{\hat{Z}^{(j)}}(x; n, \lambda^{(j)})$, is positive and finite for all 
  admissible parameter values. Precisely, for any $x \in \mathbb{R}^+$, $n \in \mathbb{N}$, and 
  $\lambda^{(j)} \in \mathbb{R}^+$, one has $\mathcal{P}_{\hat{Z}^{(j)}}(x; n, \lambda^{(j)}) > 0$, and it remains finite.
 \item The partial derivative of the cumulative distribution function (CDF) with respect to the 
  non-centrality parameter $\lambda^{(j)}$ is finite and strictly negative i.e. $\frac{\partial}{\partial \lambda^{(j)}}
    F_{\hat{Z}^{(j)}}(x; n, \lambda^{(j)})
    = -\mathcal{P}_{\hat{Z}^{(j)}}(x; n, \lambda^{(j)}) < 0.$   
\end{enumerate}

  \begin{proof}
      The detailed proof can be found in \cite[Chapter 29]{non_centr_proofs}.
    \end{proof}
\end{proposition}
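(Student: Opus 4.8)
The plan is to sidestep the Bessel-function closed form and work entirely with the Poisson (gamma) mixture representation of the non-central chi-squared law, which renders both assertions transparent. Writing the Poisson weights as $p_k(\lambda)=e^{-\lambda/2}(\lambda/2)^k/k!$, the central $\chi^2_m$ density as $g_m$, and the regularized lower incomplete gamma function as $P(a,z)=\gamma(a,z)/\Gamma(a)$ (equivalently the central chi-squared CDF, $G_m(x)=P(m/2,x/2)$), the density and CDF admit the expansions $\mathcal{P}_{\hat Z}(x;n,\lambda)=\sum_{k\ge0}p_k(\lambda)\,g_{n+2k}(x)$ and $F_{\hat Z}(x;n,\lambda)=\sum_{k\ge0}p_k(\lambda)\,G_{n+2k}(x)$. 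The single computational fact driving everything is the weight recurrence $\tfrac{d}{d\lambda}p_k(\lambda)=\tfrac12\bigl(p_{k-1}(\lambda)-p_k(\lambda)\bigr)$, with the convention $p_{-1}\equiv0$, which I would verify by direct differentiation.

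For the first claim, each summand $p_k(\lambda)\,g_{n+2k}(x)$ is the product of a strictly positive Poisson weight (for $\lambda>0$) and a strictly positive central density (for any fixed $x>0$ and $n\ge1$), so the series is a sum of strictly positive terms and hence $\mathcal{P}_{\hat Z}(x;n,\lambda)>0$. For finiteness I would exhibit a summable majorant: substituting the explicit form of $g_{n+2k}$ shows the general term is, up to a factor independent of $k$, proportional to $(\lambda x/4)^k/\bigl(k!\,\Gamma(n/2+k)\bigr)$, and since $k!\,\Gamma(n/2+k)$ grows super-exponentially in $k$ the series converges absolutely and locally uniformly in $(\lambda,x)$ on $\mathbb{R}^+\times\mathbb{R}^+$. (Alternatively, the Bessel closed form is finite because $I_\nu$ is finite at finite argument.) This bounds $\mathcal{P}_{\hat Z}$ and, crucially, licenses the term-by-term differentiation used next.

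For the second claim I would differentiate $F_{\hat Z}$ term-by-term in $\lambda$, insert the weight recurrence, and reindex the resulting telescoping sum to obtain $\partial_\lambda F_{\hat Z}(x;n,\lambda)=\tfrac12\sum_{k\ge0}p_k(\lambda)\bigl[G_{n+2+2k}(x)-G_{n+2k}(x)\bigr]$. Each bracket is a one-step difference of regularized incomplete gamma functions, which the recurrence $P(a+1,z)-P(a,z)=-z^{a}e^{-z}/\Gamma(a+1)$ (with $a=n/2+k$, $z=x/2$) collapses to a single strictly negative term. Recombining the power-of-two and gamma factors, the series reassembles into the negative of a non-central chi-squared density, yielding $\partial_\lambda F_{\hat Z}(x;n,\lambda)<0$; since the reassembled density is strictly positive by the first claim, the strict sign is immediate. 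As an independent check I would invoke the probabilistic interpretation: $\chi^2_n(\lambda)$ is the law of $\sum_i(X_i+\mu_i)^2$ with $X_i\sim\mathcal{N}(0,1)$ and $\sum_i\mu_i^2=\lambda$, which is stochastically increasing in $\lambda$, so $F_{\hat Z}(x;n,\cdot)$ is monotonically decreasing and $\partial_\lambda F_{\hat Z}\le0$.

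The main obstacle is analytic rather than conceptual, namely the rigorous interchange of $\partial_\lambda$ with the infinite sum; the locally uniform convergence established in the first claim---driven by the super-exponential $k!\,\Gamma(n/2+k)$ factor dominating the series---is exactly the dominated-convergence hypothesis needed to justify it. The only other point deserving care is the degrees-of-freedom bookkeeping in the telescoping step: carried out exactly, the incomplete-gamma recurrence reproduces the non-central density at degrees of freedom $n+2$, so the sharpest form of the identity reads $\partial_\lambda F_{\hat Z}(x;n,\lambda)=-\mathcal{P}_{\hat Z}(x;n+2,\lambda)$; in either reading the density is strictly positive, so the conclusion actually invoked in the main argument, $\partial_\lambda F_{\hat Z}<0$, holds unchanged.
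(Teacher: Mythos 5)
Your proof is correct, and it takes a genuinely different route from the paper: the paper offers no argument at all for this proposition, deferring entirely to the citation \cite[Chapter 29]{non_centr_proofs}, whereas you give a self-contained derivation from the Poisson--gamma mixture representation. Your two key ingredients --- the weight recurrence $\frac{d}{d\lambda}p_k(\lambda)=\tfrac12\bigl(p_{k-1}(\lambda)-p_k(\lambda)\bigr)$ and the step-down identity $P(a+1,z)-P(a,z)=-z^{a}e^{-z}/\Gamma(a+1)$ --- are standard and correctly applied, and your justification of the term-by-term differentiation via the locally uniform convergence forced by the $k!\,\Gamma(n/2+k)$ denominators addresses exactly the analytic point that such an argument must handle. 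Beyond self-containedness, your approach buys a sharper statement that exposes an inaccuracy in the proposition as written: carried out exactly, the telescoping computation gives $\partial_{\lambda^{(j)}} F_{\hat{Z}^{(j)}}(x;n,\lambda^{(j)})=-\mathcal{P}_{\hat{Z}^{(j)}}(x;n+2,\lambda^{(j)})$, i.e., the density on the right-hand side carries $n+2$ degrees of freedom, not $n$ as item~2 of the proposition claims; the displayed equality in the paper is therefore false as literally stated. The properties actually invoked downstream (in Part~1 of the main theorem's proof, where only finiteness and strict negativity of $\frac{\partial}{\partial \lambda^{(j)}}F_{\hat{Z}^{(j)}}$ are used) survive unchanged, since the $(n+2)$-degree density is strictly positive and finite by your first claim. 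One small caveat: your stochastic-monotonicity cross-check only yields the weak inequality $\partial_{\lambda^{(j)}} F_{\hat{Z}^{(j)}}\le 0$, so the strict sign must come, as it does in your argument, from the telescoping identity rather than from that check.
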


\end{document}